\documentclass{article}

\PassOptionsToPackage{sort&compress}{natbib}
\usepackage[preprint]{corl_2026} 
\newcommand{\method}{CORD-SLS}
\usepackage{booktabs}
\usepackage{multirow}
\usepackage{amsmath,amssymb}
\usepackage{makecell}
\usepackage{amsthm}

\newtheorem{theorem}{Theorem}

\usepackage{enumitem}
\usepackage{graphicx}
\usepackage{adjustbox}
\usepackage{float}
\usepackage{tikz}
\usepackage{subcaption}
\usepackage{fontawesome5}
\usepackage[most]{tcolorbox}
\newtcbox{\linkpill}{
  on line,
  box align=base,
  colback=gray!10,
  colframe=gray!10,
  arc=1.5mm,
  boxrule=0pt,
  left=2pt,
  right=2pt,
  top=1pt,
  bottom=1pt
}
\usepackage{wrapfig}
\definecolor{slsorange}{HTML}{38A4F5}

\newcommand{\vsubdividershort}{%
    \tikz[baseline=-0.5ex]\draw[dotted, black!90, line width=0.6pt] (0,0) -- (0,1.4cm);
}
\newcommand{\vsubdivider}{%
    \tikz[baseline=-0.5ex]\draw[dotted, black!90, line width=0.6pt] (0,0) -- (0,2.0cm);
}

\title{\looseness-1Robustness without Wrinkles: \hspace{-1pt}Parallel Simulation and Robust MPC for Certified Deformable Manipulation}

\author{
  Wei-Chen Li$^\star$, Jeffrey Fang$^\star$, Sasanka Polisetti, Yuexi Song, Glen Chou\\
  Georgia Institute of Technology, Atlanta, GA 30308\\
  \texttt{\{wli777, jfang301, spolisetti6, ysong644, chou\}@gatech.edu} \\
  $^\star$ Equal contribution, order selected by coin flip\\
  \linkpill{\href{https://trustworthyrobotics.github.io/CORD-SLS/}{\textcolor{slsorange}{\faGlobe\ Website}}}
\quad
\linkpill{\href{https://github.com/trustworthyrobotics/CORD-SLS}{\textcolor{slsorange}{\faGithub\ Code}}}
\quad
\linkpill{\href{https://youtu.be/glPFcKFUvWo}{\textcolor{slsorange}{\faYoutube\ Video}}}
}

\begin{document}
\maketitle

\begin{abstract}
    We present CORD-SLS, a real-time control method for safe deformable object manipulation, with a focus on ropes and cloth. At its core is a GPU-parallel differentiable simulator with contact smoothing which enables efficient gradient-based planning through intermittent contact. To robustly satisfy constraints under model and sensing uncertainty, we develop a real-time, GPU-parallel output-feedback robust model predictive control (MPC) algorithm that plans with this simulator. We further show that the simulator accelerates model-based RL for training neural manipulation policies. To improve real-world robustness, we use conformal prediction to calibrate visual-feedback and perception-error bounds for MPC, producing reachable tubes that enable high-probability safe control. We evaluate CORD-SLS on high-dimensional, contact-rich rope and cloth manipulation tasks in simulation and hardware, including obstacle avoidance, routing, folding, and smoothing. Across settings, CORD-SLS achieves millisecond-speed planning, exceeding baselines in safety, speed, and task success. 
\end{abstract}

\keywords{manipulation, differentiable simulation, model-based control and RL} 

\begin{figure}[H]
    \centering
    \includegraphics[width=\linewidth]{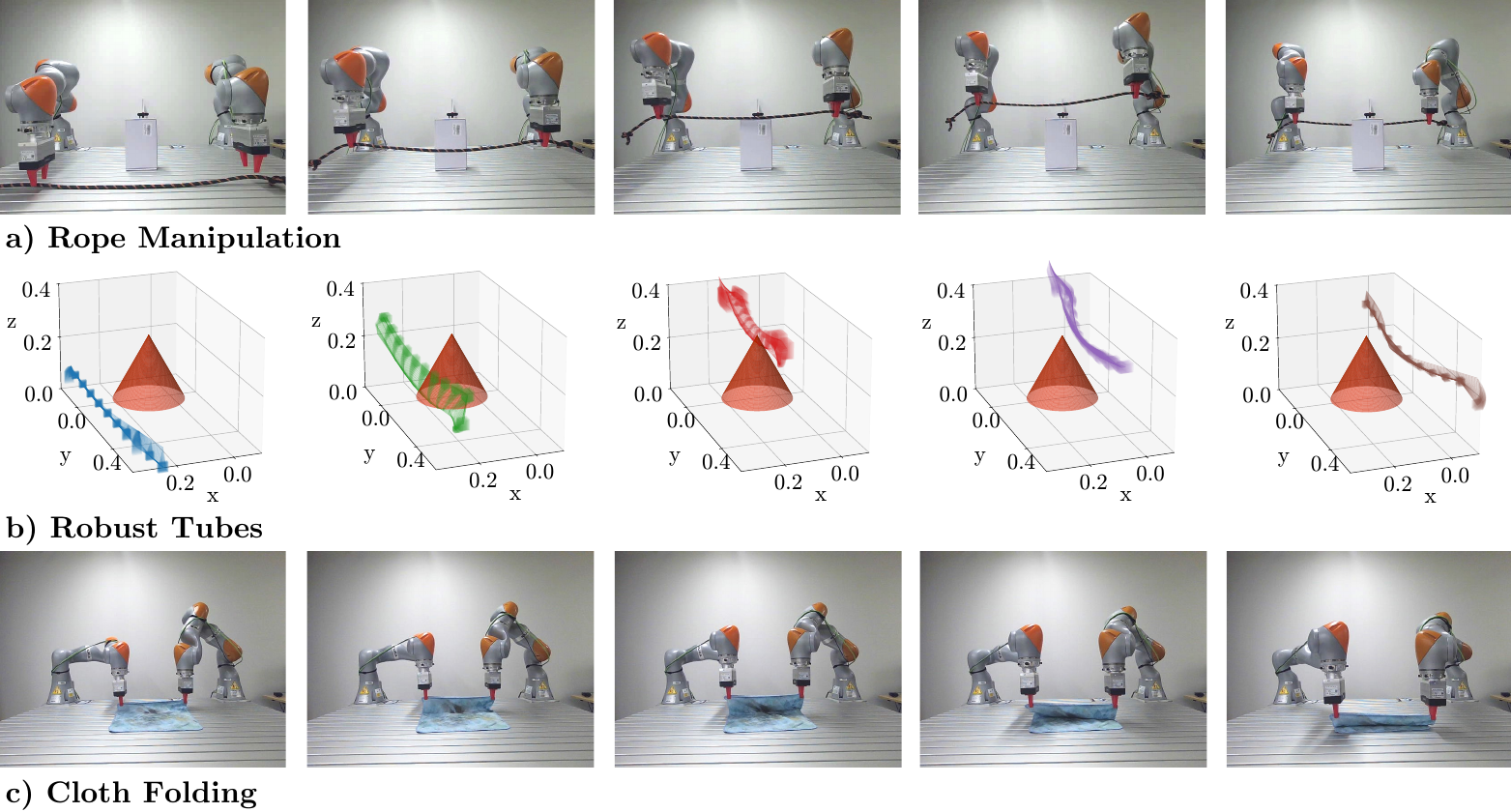}
    \caption{
    Hardware results showcase \method's real-world efficacy. In \textbf{a)}, \method~robustly solves a constrained rope manipulation task, lifting the rope and navigating over an obstacle with our contact-smoothed differentiable simulator. In \textbf{b)}, we show the associated robust tubes computed by \method~ when avoiding the obstacle (red). In \textbf{c)}, \method~extends to cloth folding.
    }
    \label{fig:hardware_experiments}
\end{figure}
\vspace{-20pt}
\section{Introduction}
\vspace{-7pt}

Manipulation of deformable objects, like ropes and cloth, is important for safety- and performance-critical applications in home robots, logistics, and surgery \cite{yin2021modeling}. Yet controlling these objects is difficult: deformable objects often lack compact state representations, and convenient approximations, such as mesh-based cloth, require planning over many degrees of freedom and high-dimensional nonlinear dynamics. Manipulation also requires deciding where and when to make contact, as well as constraint satisfaction, e.g., obstacle avoidance, goal reaching, and task requirements, despite uncertainty in dynamics and perception. Simulators can provide useful models to plan such behavior, if they are sufficiently fast, accurate, and planner-friendly.
However, prior work leaves a gap between scalable planning, deformable contact dynamics, and robustness. Classical methods plan tractably via geometric abstractions, but produce open-loop plans with limited robustness. Learned policies can plan in visual or latent spaces, but lack robustness assurances and require slow training. Differentiable simulators enable fast gradient-based planning, but discontinuous gradients arising from na\"ive contact formulations can prevent efficient contact discovery, required for manipulation. 

To close these gaps, we propose \method~(\underline{C}ontact-smoothed \underline{O}utput-feedback \underline{R}obust control of \underline{D}eformables via \underline{S}ystem \underline{L}evel \underline{S}ynthesis), a framework for real-time differentiable simulation and robust perception-based manipulation planning for deformable objects. We first develop a GPU-parallel differentiable simulator with contact smoothing, providing planning-friendly gradients at real-time simulation rates. We then plan with this simulator via a GPU-parallel robust MPC method based on system level synthesis (SLS), enabling real-time planning in high-dimensional settings. To handle perception uncertainty, we introduce a GPU-parallel output-feedback SLS formulation for robust vision-based control. We also show that the simulator can accelerate model-based reinforcement learning (MBRL) by enabling neural control policies to be trained in a sample-efficient fashion via analytical policy gradients. Finally, we calibrate data-driven worst-case uncertainty bounds for robust MPC, yielding probabilistic robustness guarantees in realistic settings.
Our contributions are:

\vspace{-6pt}
\begin{itemize}[leftmargin=1.1em,itemsep=0pt,topsep=5pt,parsep=0pt]
    \item A GPU-parallel, differentiable, real-time, contact-smoothed deformable object simulator.
    \item A GPU-parallel method for real-time synthesis of robust output-feedback control policies for deformable manipulation via reachability-constrained SLS, along with an MBRL method for training neural control policies that is accelerated via analytical policy gradients from the simulator.
    \item A real-world vision-based implementation using keypoint tracking, together with  calibration of perception and model-error bounds via conformal prediction (CP) for high-probability safety.
    \item Evaluation on contact-rich rope and cloth manipulation tasks in simulation and hardware (obstacle avoidance, folding, and flattening), outperforming baselines in safety, speed, and task success.
\end{itemize}
\vspace{-12pt}
\section{Related Work}
\vspace{-6pt}

\paragraph{Deformable Object Simulation}
Position-based dynamics (PBD) and its extension, XPBD, simulate deformable objects by projecting particles onto geometric constraints, enabling fast and stable simulations~\cite{muller2007position,ruan2018accounting, liu2022differentiable}.
However, PBD-based methods achieve robustness at the expense of physical fidelity: effective stiffness depends on the timestep and the number of solver iterations, and existing differentiable variants do not fully address rigid-deformable contact and friction.
More physically grounded models have been developed for deformable linear objects (DLOs). In particular, Cosserat rod and discrete elastic rod formulations capture bending, twisting, stretching, and shearing, and have been extended to handle frictional contact~\cite{gazzola2018forward,li2025discrete,bergou2008discrete,bergou2010discrete}. Differentiable rod simulators support system identification, but are less suited to contact-rich control because planning requires explicit contact-mode discovery~\cite{chen2024differentiable}. For cloth simulation, methods based on discrete shells, Kirchhoff-Love shells, and projective dynamics efficiently model thin surfaces and can incorporate dry frictional contact~\cite{grinspun2003discrete,wen2023kirchhoff,ly2020projective}. However, the nonconvexity of contact resolution often leads to inconsistent gradients in differentiable cloth simulators~\cite{liang2019differentiable,li2022diffcloth}. Our method uses a convex formulation of contact and smoothed dynamics to improve contact discovery during planning, producing contact-seeking gradients, while forward rollouts and execution use the nonsmooth dynamics to preserve fidelity~\cite{suh2022differentiable,suh2022bundled,li2026certified}.

\vspace{-10pt}
\paragraph{Planning}

\looseness-1Deformable-object planning is difficult because perception, dynamics, and control are tightly coupled \cite{yin2021modeling}, and models must trade off fidelity and speed \cite{jimenez2012survey, arriola2020modeling}. Unlike imitation learning \cite{seita2021learning, salhotra2022learning}, planners do not require expert data. 
Classical planners use reduced geometric abstractions for speed ~\cite{bayazit2002probabilistic,saha2006motion}, and recent hybrids combine global planning with local control or simplified dynamics~\cite{mcconachie2020manipulating,wang2025goal}. These methods improve tractability but yield open-loop plans with limited robustness.
Learned models predict perceptual or latent dynamics~\cite{shen2024action,zhang2025manipulating,yan2021learning,ma2022learning,li2024deformnet} for planning, but require extensive training and lack robustness guarantees. Higher-level abstractions such as latent roadmaps extend planning horizons and can encode contact transitions~\cite{lippi2020latent,yamada2024d,lin2022planning}, but rely on task-specific skill libraries. Differentiable simulators enable gradient-based planning ~\cite{chen2024differentiableparticles,zhang2024achieving,chen2022daxbench,yoshida2015simulation}, but discontinuous gradients impede contact discovery for planning ~\cite{chen2022daxbench,suh2022differentiable,li2026certified}. Our method instead uses contact smoothing to plan through contact without bespoke skills, while handling bounded sim-to-real model error~\cite{blanco2024benchmarking}.

\looseness-1Safety and robustness are less-studied. Some methods plan nominally-safe motions \cite{huang2023deformable, zhang2024achieving} or use barrier functions to override unsafe actions \cite{tang2024learning, aksoy2025planning}; however, these rely on approximate models without accounting for model error, provide only local corrections, and do not address contact discovery. Reachability methods evaluate DLO safety \cite{zhang2025certifiably} but do not consider contact-rich planning or cloth manipulation under perception error. 
Other methods improve robustness through model trust regions \cite{mitrano2021learning}, gradient refinement on perturbed DLO systems \cite{envall2026robust}, or MPC demonstrations \cite{wang2025robot}, but lack formal guarantees and contact-rich planning.
While safe output-feedback control has been considered in recent work \cite{yang2023safe, chou2022safe, chou2023synthesizing}, these methods struggle to scale to the high-dimensional state spaces of deformable objects.
In contrast, we unify differentiable simulation, contact smoothing, and output-feedback robust MPC via SLS, producing contact-seeking gradients while robustly accounting for model and perception uncertainty. We build on GPU-parallel reachability \cite{fang2026safe} via SLS \cite{anderson2019system,leeman2025robust}, extending it beyond full-state feedback. Unlike existing slow CPU-bound output-feedback SLS methods \cite{leeman2026vision}, our GPU-parallel solver enables real-time robust control.
\vspace{-3pt}
\section{Problem Statement}
\vspace{-3pt}

We consider a known state representation $q \in \mathcal{Q} \subseteq \mathbb{R}^{n_q}$ for a deformable object of interest (e.g., 3D positions of links for rope, or of mesh node positions for cloth) actuated by control input $u \in \mathcal{U} \subseteq \mathbb{R}^{n_u}$ and governed by uncertain dynamics \eqref{eq:dynamics_real}, where $f: \mathcal{Q}\times \mathcal{U} \rightarrow \mathcal{Q}$. To account for perception latency, we model the system as being observed with uncertain time-delayed measurements $y \in \mathcal{Y} \subseteq \mathbb{R}^{n_y}$ according to an observation map $h:\mathcal{Q}\rightarrow \mathcal{Y}$ \eqref{eq:measurements_real},

\vspace{-13pt}
\begin{nolinenumbers}
\begin{subequations}\label{eq:system_real}
\noindent
\begin{minipage}[t]{0.48\textwidth}
\vspace{0pt}
\begin{equation}
    q_{k+1} = f(q_k, u_k) + E(q_k)w_k,
    \label{eq:dynamics_real}
\end{equation}
\end{minipage}
\hfill
\begin{minipage}[t]{0.48\textwidth}
\vspace{0pt}
\begin{equation}
    y_{k+1} = h(q_k) + F(q_k)e_k,
    \label{eq:measurements_real}
\end{equation}
\end{minipage}
\end{subequations}
\end{nolinenumbers}
\looseness-1where $E: \mathcal{Q} \rightarrow \mathbb{R}^{n_q \times n_q}$ and $F: \mathcal{Q} \rightarrow \mathbb{R}^{n_y \times n_y}$ and $w \in \mathcal{B}^{n_q} := \{w \in \mathbb{R}^{n_q} \mid \Vert w\Vert_2 \le 1\}$ and $e \in \mathcal{B}^{n_y}$ are bounded disturbances.
We aim to design 1) a differentiable deformable object simulator that approximates the dynamics \eqref{eq:dynamics_real} while remaining friendly for planning and learning, and 2) an efficient and robust planning and control algorithm that uses this simulator to solve reachability-aware MPC problems for safe deformable object manipulation. We consider two specific problems:\vspace{4pt}\\
\textbf{Problem 1.} \textit{Differentiable contact-smoothed deformable simulator.}
Develop a simulator for deformable linear objects and cloth that 1) is differentiable, 2) GPU-parallel, and 3) provides informative gradients for planning and policy learning through contact, defining the planner dynamics
\vspace{-2pt}
\begin{equation}\label{eq:dynamics_sim}
    q_{k+1} = f(q_k, u_k).
\end{equation}
\vspace{-2pt}
\hspace{-6pt} \textbf{Problem 2.} \textit{Reachability-informed MPC for robust deformable manipulation.} Optimize a nominal length-$N$ state $\textbf{z}:=\{z_k\}_{k=0}^N$ and control $\textbf{v}:=\{v_k\}_{k=0}^{N-1}$ trajectory that is feasible for the simulator dynamics \eqref{eq:dynamics_sim}, together with a causal output-feedback controller $\pi := (\pi_0, \ldots, \pi_{N-1})$,  where $\pi_t: \mathcal{Y}^{t+1} \rightarrow \mathcal{U}$ stabilizes the true deformable dynamics \eqref{eq:dynamics_real} about $(\textbf{z}, \textbf{v})$ using the history of observations, and ensures closed-loop satisfaction of state and control constraints $\mathcal{S} \subseteq \mathcal{Q}\times \mathcal{U}$.
\vspace{-3pt}
\section{Method}
\vspace{-3pt}
We develop a differentiable deformable-object simulator with contact smoothing for gradient-based planning (Sec.~\ref{sec:simulator}), a GPU-parallel output-feedback SLS framework that uses it to enable real-time robust control under perception uncertainty (Sec.~\ref{sec:gpu_sls}), a sample-efficient model-based RL strategy for learning neural manipulation policies that is accelerated by analytical policy gradients from the differentiable simulator (Sec. \ref{sec:mbrl}), and a CP-based calibration procedure for translating learned perception errors into high-probability reachable-tube guarantees (Sec.~\ref{sec:conformal}).

\vspace{-6pt}
\subsection{Differentiable Deformable Object Simulator with Contact Smoothing}\label{sec:simulator}
\vspace{-4pt}
Existing deformable simulators often assume that deformable objects remain attached to the robot or rely on explicit time-stepping schemes that require small time steps for numerical stability. In contrast, we introduce a novel simulator that correctly captures switching between inactive and active constraints while leveraging an implicit time-stepping scheme, all while retaining differentiability.

Consider a nodal system with $n_v$ nodes, whose 3D coordinates, together with the $n_u$ actuated degrees of freedom (DoFs), form the configuration vector $q \in \mathbb{R}^{n_q}$, where $n_q = 3n_v + n_u$. For an elastic energy $E(q)$, the corresponding internal elastic force is given by $-\partial E(q)/\partial q$. Under a quasi-static assumption, the system satisfies
\begin{equation}
\label{eq:dynamics}
    M \frac{q^+ - q}{\delta t^2}
    + \frac{\partial E(q^+)}{\partial q^+}
    =
    \tau(u)
    + \sum_{i=1}^{n_c} J_i(q)^\top\, \lambda_i,
\end{equation}
\looseness-1where $M$ is the mass matrix, $\tau(u)$ are the external generalized forces, including gravity acting on the nodal system and actuation forces applied to the actuated DoFs, $\lambda_i \in \mathbb{R}^3$ is the $i$th constraint force, $\delta t$ is the discrete timestep, and $J_i \in \mathbb{R}^{3 \times n_q}$ is the corresponding constraint Jacobian. Both the elastic and constraint forces are evaluated at the next time step, yielding an implicit time-stepping scheme for improved numerical stability. For holonomic constraints, such as a gripper rigidly interacting with the nodal system, we suppose an active constraint $c_i(q) = 0 \in \mathbb{R}^3$ enforces the gripper attachment. The next configuration $q^+$ and constraint force $\lambda_i$ must satisfy the complementarity condition
\begin{equation} \label{eq:holonomic-constraint}
    0 = J_i(q)\, (q^+ - q) + c_i(q)
    \;\perp\;
    \lambda_i \in \mathbb{R}^3,
\end{equation}
where the constraint Jacobian is defined as $J_i(q) := \partial c_i(q) / \partial q$. 
For contact constraints, we adopt Anitescu's convex relaxation~\cite{anitescu2006optimization}. Here, the next step configuration $q^+$ and contact force $\lambda_i$ satisfy
\begin{equation} \label{eq:contact-constraint}
    \mathcal{F}_\mu^* \ni J_i(q)\, (q^+ - q) + \begin{bmatrix} \phi_i(q) & 0 & 0 \end{bmatrix}^\top
    \;\perp\;
    \lambda_i \in \mathcal{F}_\mu,
\end{equation}
where $\phi_i(q)$ is the signed distance of contact pair $i$, $\mathcal{F}_\mu$ is the friction cone, and $\mathcal{F}_\mu^*$ is its dual cone. \vspace{5pt} \\
\textbf{Simulation Solver}
We solve the dynamics in \eqref{eq:dynamics} subject to the holonomic and contact constraints in \eqref{eq:holonomic-constraint} and \eqref{eq:contact-constraint} using a two-stage procedure. The first stage computes the system response under the holonomic constraints, while the second stage resolves contact interactions. In the first stage, we solve for an intermediate configuration $q^*$ satisfying
\begin{equation} \label{eq:dynamics-first-stage}
    F(q^*; q, u) :=
    M \frac{q^* - q}{\delta t^2}
    + \frac{\partial E(q^*)}{\partial q^*}
    - \tau(u)
    + \sum_{i=1}^{n_h} J_i(q)^\top D_i\, \bigl( J_i(q)\, (q^* - q) + c_i(q) \bigr)
    = 0,
\end{equation}
where $n_h$ denotes the number of holonomic constraints,  and $D_i \in \mathbb{R}^{3 \times 3}$ is a diagonal stiffness matrix associated with the $i$th holonomic constraint. When the constraint is active, $D_i$ is chosen sufficiently large to enforce \eqref{eq:holonomic-constraint}; otherwise, $D_i = 0$. 
\eqref{eq:dynamics-first-stage} is solved using Newton's method and implicitly defines a mapping from the current configuration $q$ to the intermediate configuration $q^*$. In the second stage, we solve the optimization problem
\begin{equation} \label{eq:dynamics-second-stage}
\begin{aligned}
    \min_{q^+} \quad  & \frac{1}{2} (q^+ - q^*)^\top\, \bigl[ \underbrace{ M + \frac{\partial^2 E(q^*)}{\partial (q^*)^2} + \sum_{i=1}^{n_h} J_i(q)^\top D_i J_i(q)}_{P} \bigr] \,(q^+ - q^*) \\
    \text{s.t.} \quad & J_i(q^*)\, (q^+ - q^*) + \begin{bmatrix} \phi_i(q^*) & 0 & 0 \end{bmatrix} \in \mathcal{F}_\mu^*, \quad \forall i = n_h\!+\!1, \dots, n_c.
\end{aligned}
\end{equation}
The KKT conditions of \eqref{eq:dynamics-second-stage} recover the contact constraint in \eqref{eq:contact-constraint} together with $\textstyle P\, (q^+\!-\!q^*) \!= \!\sum_{i=n_h\!+\!1}^{n_c} J_i(q)^\top\, \lambda_i$, which corresponds to a linearization of the dynamics about the intermediate configuration $q^*$.
Given a $(q,u)$ pair, solving \eqref{eq:dynamics-first-stage}--\eqref{eq:dynamics-second-stage} gives the next configuration $q^+$.
The solvers for \eqref{eq:dynamics-first-stage}--\eqref{eq:dynamics-second-stage} are implemented in JAX, allowing GPU-accelerated matrix factorizations.

\textbf{Differentiating Through Simulation}
To obtain gradients through the dynamics, we differentiate through the optimization problems in \eqref{eq:dynamics-first-stage} and \eqref{eq:dynamics-second-stage}. Rather than differentiating through the unrolled iterative solvers, we employ implicit differentiation. For the first-stage problem \eqref{eq:dynamics-first-stage}, let $\theta := (q,u)$ denote the inputs to the dynamics. Differentiating the implicit equation defining $q^*$ yields
\begin{equation}
    \frac{\partial F}{\partial q^*} \frac{\partial q^*}{\partial \theta} + \frac{\partial F}{\partial \theta} = 0
    \implies
    \frac{\partial q^*}{\partial \theta} = - \Bigl( \frac{\partial F}{\partial q^*} \Bigr)^{-1} \frac{\partial F}{\partial \theta}.
\end{equation}
For the second-stage problem \eqref{eq:dynamics-second-stage}, we differentiate its KKT conditions, denoted by $G(q^+, \lambda; q^*) = 0$, with respect to $q^*$. This gives
\begin{equation}
    K
    \begin{bmatrix}
        \frac{\partial q^+}{\partial q^*} \\
        \frac{\partial \lambda}{\partial q^*}
    \end{bmatrix}
    + \frac{\partial G}{\partial q^*} = 0
    \implies
    \frac{\partial q^+}{\partial q^*} = - \begin{bmatrix} I & 0 \end{bmatrix} K^{-1} \frac{\partial G}{\partial q^*},
\end{equation}
where $K$ denotes the KKT matrix of the optimization problem. The simulator can thus be treated as a differentiable layer. For a scalar objective $\mathcal{L}$, the adjoint with respect to $\theta = (q,u)$ is given by
\begin{equation}  \label{eq:diff-sim}
    \frac{\partial \mathcal{L}}{\partial \theta}
    =
    \frac{\partial \mathcal{L}}{\partial q^+}
    \frac{\partial q^+}{\partial q^*}
    \frac{\partial q^*}{\partial \theta}
    =
    \begin{bmatrix}
        \dfrac{\partial \mathcal{L}}{\partial q^+} & 0
    \end{bmatrix}
    K^{-1}
    \frac{\partial G}{\partial q^*}
    \left(
        \frac{\partial F}{\partial q^*}
    \right)^{-1}
    \frac{\partial F}{\partial \theta}.
\end{equation}
This expression naturally defines a custom backward pass for the simulator layer. Starting from the adjoint of $q^+$, we first solve a linear system involving $K^\top$ and compute a vector-Jacobian product with $\partial G / \partial q^*$ to obtain the adjoint of $q^*$. We then solve a second linear system involving $(\partial F / \partial q^*)^\top$ and compute a vector-Jacobian product with $\partial F / \partial \theta$ to obtain the adjoint of $\theta$. Since the factorization of $K$ and $\partial F / \partial q^*$ are byproducts of the forward solve, the backward pass incurs no additional cost. 

\textbf{Contact Smoothing} Contact dynamics often exhibit a vanishing-gradient issue: prior to contact activation, the control inputs associated with the actuated DoFs have no influence on the state of the deformable object \cite{li2026certified}. As a result, gradient-based optimizers may stagnate due to the absence of informative gradient signals, failing to reach the goal.

To address this issue, we adopt the idea of surrogate gradients \cite{devolder2014first}, in which the gradient used for planning is replaced by a surrogate estimator that does not correspond to the true gradient obtained by differentiating through the physical rollout. This introduces a controlled bias in exchange for improved gradient information in regimes where the true gradient is uninformative. 
The holonomic constraint in \eqref{eq:holonomic-constraint} is active only when $c_i(q)=0$, at which point a nonzero stiffness matrix $D_i$ is used in \eqref{eq:dynamics-first-stage} to enforce the constraint. This switching between inactive and active constraints introduces both discontinuities and vanishing gradients. To smooth this transition, we redefine $D_i$ as $D_i = D_{i,\max} \exp( -\|c_i(q)\|_2 / \kappa )$,
where $\kappa$ is a smoothing parameter. As $\kappa \to 0$, this formulation approaches the original indicator-like activation behavior. 

The contact constraint in \eqref{eq:contact-constraint}, which encodes non-contact, stiction, and sliding regimes, similarly introduces discontinuous and vanishing gradients. We smooth this by relaxing the complementarity condition in \eqref{eq:contact-constraint} to $( J_i(q)\, (q^+ - q) + \begin{bmatrix} \phi_i(q) & 0 & 0 \end{bmatrix})^\top
    \lambda_i
    = \kappa$, which recovers the original complementarity constraint when $\kappa = 0$. 
The smoothed dynamics is used solely for gradient computation and the nonsmooth dynamics are used for forward rollouts, preserving model accuracy.
\vspace{-5pt}
\subsection{Simulator-Informed GPU-Parallel Output Feedback SLS}\label{sec:gpu_sls}
\vspace{-3pt}
To plan robust output feedback policies for the deformable object, we wish to approximately solve the idealized optimization \eqref{eq:output_feedback_ideal} to obtain the output-feedback policy $\boldsymbol{\pi} := (\pi_0 (\cdot), \ldots \pi_{N - 1}(\cdot))$,

\vspace{-15pt}
\begin{equation}\label{eq:output_feedback_ideal}
    \begin{aligned}
        \min_{\boldsymbol{\pi}} \quad & J(\boldsymbol{\pi}) \\
        \text{s.t.} \quad 
        & q_{k+1} = f^*(q_k, u_k) + E(q_k) w_k,  && \forall k \in [N], \qquad q_0 \in \mathcal{Q}_0,\\
        & y_{k+1} = h(q_k) + F(q_k)e_k, && \forall k \in [N],
        \qquad u_k = \pi_k(y_0,\ldots,y_k), && \forall k \in [N],\hspace{-5pt} \\
        & (q_k, u_k) \in \mathcal{S}, && \forall w_k \in \mathcal{B}^{n_q}, \;
        \forall e_k \in \mathcal{B}^{n_y}, \;
        \forall k \in [N].
    \end{aligned}
\end{equation}
where $[N] := \{0,\ldots,N-1\}$, $\mathcal{Q}_0$ denotes the set of possible initial conditions, $
\mathcal{S}
=
\left\{
(q,u) \in \mathbb{R}^{n_q+n_u}
\;\middle|\;
c_i^\top (q,u) + b_i \le 0,\;
i = 1,\ldots,n_c
\right\}$ defines the safe set. While we consider nonlinear constraints in Sec. \ref{sec:results}, to simplify notation, we describe the method for the case of affine constraints. Nonlinear constraints can be robustly enforced following the approach of \cite{zhan2025robustly}. The dynamics and observations are as defined in \eqref{eq:system_real}. 
To efficiently solve \eqref{eq:output_feedback_ideal}, we extend GPU-SLS \cite{fang2026safe}, which is limited to state-feedback (i.e., requires perfect state knowledge), to the output-feedback setting by building on \cite{leeman2026vision}. Following \cite{leeman2026vision}, the output-feedback SLS synthesis problem can be written as,

\vspace{-17pt}
\begin{subequations} \label{eq:of_optimization}\small
    \begin{align}
        \hspace{-10pt}\min_{\mathbf{z}, \mathbf{v}, \mathbf{\Phi}, \boldsymbol{\tau}} \quad & J(\mathbf{z}, \mathbf{v}, \boldsymbol{\tau}, \mathbf{\Phi}) \\[-2pt]
        \hspace{-10pt}\text{s.t.} \quad & z_{k+1} = f(z_k, v_k), \qquad \forall k \in [N] \\
        \hspace{-10pt}& \hspace{-4pt}\begin{bmatrix}\mathbf{I - ZA(z,v)},\ \mathbf{-ZB(z,v)}\end{bmatrix} \mathbf{\Phi} = \begin{bmatrix}\mathbf{I},\ \mathbf{0} \end{bmatrix}, 
        \ \mathbf{\Phi} \begin{bmatrix}
            (\mathbf{I - ZA(z,v)})^\top\hspace{-6pt},\  
            \mathbf{(-ZC(z,v))^\top}
        \end{bmatrix}^\top = \begin{bmatrix}
            \mathbf{I},\
            \mathbf{0}
        \end{bmatrix}^\top\hspace{-8pt}, \label{eq:phi_equation}\\
        \hspace{-10pt}& \textstyle\sum_{j=0}^{k-1} \left\|c_i^\top \mathbf{\Phi}_{k,j}^{\text{w}} E_j \right\|_{2,r} \hspace{-3pt}+ \left\|c_i^\top \mathbf{\Phi}_{k,j}^{\text{e}} F_j \right\|_{2,r} \hspace{-3pt}+ c_i^\top \begin{bmatrix}
            z_k \\ v_k
        \end{bmatrix} \leq -b_i, \qquad \forall i \in [n_c], \; \forall k \in [N]\label{eq:of_optimization_tightening1} \\
        \hspace{-10pt}& \textstyle\sum_{j=0}^{k-1} \left\|\mathbf{\Phi}_{k,j}^{\text{w}} E_j \right\|_{2,r} + \left\|\mathbf{\Phi}_{k,j}^{\text{e}} F_j \right\|_{2,r} \leq \tau_k, \qquad \forall k \in [N]\label{eq:of_optimization_tightening2}
    \end{align}
\end{subequations}
\vspace{-15pt}

\noindent where $\mathbf{A(z, v), B(z,v)}$ and $\mathbf{C(z, v)}$ denote the linearizations of the dynamics and observation models about the nominal trajectory $\mathbf{(z,v)}$, $\|\cdot\|_{2,r}$ denotes the row-wise two norm, and $E_j$ and $F_j$ are the corresponding linearized state and observation disturbance scalings. The matrix \(\mathbf{Z}\) denotes the block downshift operator, with \(I_{n_q}\) on the first subdiagonal and zeros elsewhere. The closed-loop response operators $\mathbf{\Phi}^\text{w}$ and $\mathbf{\Phi}^\text{e}$ characterize the propagation of process and observation disturbances, respectively, via constraint \eqref{eq:phi_equation}, and $\tau_k$ defines the reachable tube width. \eqref{eq:of_optimization_tightening1}--\eqref{eq:of_optimization_tightening2} compute output-feedback reachable tubes under disturbances $w_k$ and $e_k$, and use them to tighten the constraints so the closed-loop reachable set remains inside $\mathcal{S}$. 
To solve \eqref{eq:of_optimization}, \cite{leeman2026vision} decomposes the optimization into three stages: 1) a nominal trajectory generation $\mathbf{(z, v)}$, 2) state feedback controller optimization $\mathbf{\bar\Phi}$, and lastly 3) an output feedback controller optimization $\mathbf{\hat\Phi}$. The first two stages are handled directly by GPU-SLS \cite{fang2026safe}. We therefore focus on the parallel implementation of the third stage by expressing the observer recursions as associative operators, enabling GPU-parallel prefix scans. An optimal solution of the output controller can be found by one backward Kalman recursion,
\begin{equation} \label{eq:kalman_recursions}
    \begin{aligned} 
        \Pi_{k, 0} = \Xi \Xi^\top;\quad
        L_{k, j+1} &= - ( F_j F_j^\top + C_j \Pi_{k,j} C_j^\top)^{-1}C_j \Pi_{k,j}A_j^\top \\
        \Pi_{k, j+1} &= E_j E_j^\top + A_j \Pi_{k,j}A_j^\top + A_j \Pi_{k,j} C^\top_j L_{k, j+1},
    \end{aligned}
\end{equation}
where $\Xi$ denotes the initial state uncertainty, and $N$ independent forward propagations,
\begin{equation} \label{eq:backward_propagations}
    \hspace{-10pt}\begin{aligned}
        &\hat{\mathbf{\Phi}}^x_{k, k} = I; & 
        &\hat{\mathbf{\Phi}}^y_{k, j} = \hat{\mathbf{\Phi}}^x_{k, j} L^\top_{k,j},
        &\hat{\mathbf{\Phi}}^x_{k, j} = \hat{\mathbf{\Phi}}^x_{k, j + 1} ( A_j + L_{k,j}^\top C_j), \quad \forall k \in [N], \ \ \forall j \in [k],
    \end{aligned}
\end{equation}
The matrices $\mathbf{\Phi}^{\text{w}}$ and $\mathbf{\Phi}^{\text{e}}$ can then be assembled by using $\mathbf{M = I - ZA}$ and
\begin{equation}\small
    \hspace{-8pt}\begin{aligned}
        &\mathbf{\Phi}^{\text{xw}} = \bar{\mathbf{\Phi}}^\text{x} + \hat{\mathbf{\Phi}}^\text{x} - \bar{\mathbf{\Phi}}^\text{x} \mathbf{M} \hat{\mathbf{\Phi}}^\text{x}, && \mathbf{\Phi}^{\text{uw}} = \bar{\mathbf{\Phi}}^\text{u} \mathbf{(I - M \hat{\Phi}^x)}, 
        & \mathbf{\Phi}^{\text{xe}} = \mathbf{\hat{\Phi}}^\text{y} - \bar{\mathbf{\Phi}}^\text{x} \mathbf{M} \hat{\mathbf{\Phi}}^\text{y}, && \mathbf{\Phi}^\text{ue} = - \bar{\mathbf{\Phi}}^\text{u} \mathbf{M} \hat{\mathbf{\Phi}}^\text{y}.
    \end{aligned}\hspace{-2pt}
\end{equation}
Both \eqref{eq:kalman_recursions} and \eqref{eq:backward_propagations} require dense matrix multiplications serially through the planning horizon. As such, we use parallel associative scans on the GPU. We define the conditional value function, $V_{i \rightarrow l} (\mathbf{\Phi}_{k, i}, \mathbf{\Phi}_{k,l})\!=\!\max_\gamma \tfrac{1}{2} \mathbf{\Phi}_{k,i}^\top \tilde P_{i, l} \mathbf{\Phi}_{{k, i}} - \tfrac{1}{2} \gamma^\top \tilde{D}_{i,l} \gamma - \gamma ^\top (\mathbf{\Phi}_{k,l} - \tilde{A}_{i,l} \mathbf{\Phi}_{k, i})$, and combination rules,
\begin{equation} \label{eq:combination_rules}
    \begin{aligned}
        \tilde P_{i, l}
        &=
        \tilde P_{i,m} \otimes \tilde P_{m, l}
        := 
        ( \tilde A_{i, m}^\top
        ( I + \tilde{P}_{m,l} \tilde D_{i,m})^{-1} \tilde{P}_{m,l} \tilde{A}_{i,m}
        +
        \tilde P_{i,m}, \\[-2pt]
        \tilde{A}_{i,l}
        &=
        \tilde{A}_{i,m} \otimes \tilde{A}_{m,l}
        :=
        \tilde A_{m, l} ( I + \tilde{D}_{i,m} \tilde P_{m,l})^{-1} \tilde A_{i,m}, \\[-2pt]
        \tilde{D}_{i,l}
        &=
        \tilde{D}_{i,m} \otimes \tilde{D}_{m, l}
        := 
        \tilde{A}_{m,l}( I + \tilde{D}_{i,m} \tilde{P}_{m,l})^{-1} \tilde{D}_{i,m} ( \tilde{A}_{m,l})^\top
        +
        \tilde{D}_{m,l},
    \end{aligned}
\end{equation}
with the initial values as
\begin{equation} \label{eq:initialization}\small
    \begin{gathered}
        \tilde{A}_{i + 1, i} = A_i^\top, \ \ \
        \tilde{D}_{i + 1, i} = C_i^\top(F_iF_i^\top )^{-1} C_i, \ \ \ 
        \tilde{P}_{i + 1, i} = E_iE_i^\top, \ \ \
        \tilde{P}_{1,0} = \Xi \Xi^\top, \ \ \
        \tilde{A}_{1,0} = 
        \tilde{D}_{1,0} = 0.
    \end{gathered}
\end{equation}
We can then recover Kalman gains $L_{k,j}$ through a forward parallel associative scan using \eqref{eq:combination_rules}--\eqref{eq:initialization} to recover $\Pi_{k,j} = \tilde P_{0, j} := \tilde P_{0, 1} \otimes \tilde P_{1, 2} \otimes \cdots \otimes \tilde P_{j - 1, j}$, and then $L_{k,j}$ from \eqref{eq:kalman_recursions}. To retrieve $\mathbf{\hat \Phi}$, we use a similar procedure and define the conditional observer propagation $\hat{N}^{(k)}_{i \rightarrow l} ( \hat{\mathbf{\Phi}}^x_{k,i}, \hat{\mathbf{\Phi}}^x_{k,l}) =  \hat{\mathbf{\Phi}}^x_{k,i} \hat{A}_{i,l}^{(k)}$ for each disturbance injection $k$
with combination rule $\hat{A}^{(k)}_{m,l}\hat{A}^{(k)}_{i,m}$ and initialization $\hat{A}^{(k)}_{i+1,i} = A_i + L_{k,i}^\top C_i$.  We obtain the solution to $\eqref{eq:backward_propagations}$ by a forward parallel associative scan, $\hat{\mathbf{\Phi}}^x_{k,i} = \hat{N}_{j + 1 \rightarrow j}^{(k)} \otimes \hat{N}_{j \rightarrow j - 1}^{(k)} \otimes \cdots \otimes \hat{N}_{i + 1 \rightarrow i}^{(k)}$, and $\hat{\mathbf{\Phi}}^y_{k, j} = \hat{\mathbf{\Phi}}^x_{k, j} L^\top_{k,j}$, solving the observer-dependent closed-loop response in logarithmic time, analogous to the complexity of GPU-SLS.
\vspace{-5pt}

\subsection{Accelerating Model-Based RL via Analytical Policy Gradients (APG)}\label{sec:mbrl}
\vspace{-4pt}

The differentiable simulator proposed in Sec. \ref{sec:simulator} can be used to support model-based reinforcement learning (RL) in multiple ways; we point out two ways here. First, it can generate expert trajectories to guide policy learning. This is especially beneficial in contact-rich deformable manipulation, where sparse rewards and intermittent contact can make exploration from scratch prohibitively sample-inefficient. Second, by incorporating the contact-smoothed simulator as a differentiable layer \eqref{eq:diff-sim}, direct policy gradients can guide policies toward discovering and maintaining contact with reduced reliance on stochastic exploration, improving the stability and sample efficiency of policy learning. We denote this model-based RL (MBRL) variant of our method as APG, as it leverages \underline{a}nalytical \underline{p}olicy \underline{g}radients to accelerate policy learning, and is evaluated in Sec. \ref{sec:results}.
\vspace{-5pt}

\subsection{Calibrated Reachability Guarantees via Conformal Prediction (CP)}\label{sec:conformal}
\vspace{-3pt}
The SLS formulation in Sec.~\ref{sec:gpu_sls} requires uncertainty sets for the closed-loop dynamics and observation model to certify safety for the real system~\eqref{eq:system_real}. On hardware, however, independent labels for the true process disturbance or state-estimation error are typically unavailable. Instead, we measure the discrepancy between the model-predicted next observation and the next observation returned by perception, and use CP to calibrate an aggregate one-step prediction error in the controller coordinates. 
In our implementation (Sec.~\ref{sec:results}), the perception estimates the full rope state from RGB-D images. Thus, in~\eqref{eq:measurements_real}, we take $h(q)=q$, so $y_{k+1}=q_k+e_k$ with $\|e_k\|_2\le\rho$, where $\rho$ is a worst-case measurement-error bound from camera accuracy specifications. We define the perceived state $\bar q_k:=y_{k+1}$ as the noisy full-state estimate used by the controller. Although the physical system evolves on latent $q_k$, our SLS implementation plans over $\bar q_k$; the CP-calibrated residual certifies tubes for $\bar q_k$, which are conservatively inflated by $\rho$ to obtain true-state tubes.

Let $\mathcal{D}_{\mathrm{cal}}:=\{(\bar q_i,u_i,\bar q_i^+)\}_{i=1}^n$ be a held-out calibration set of perceived-state transitions, where $\bar q_i^+$ is the next perceived state; equivalently, a sample is $(y_{k+1},u_k,y_{k+2})$. We define nonconformity scores
$s_i:=\|\bar q_i^+-f(\bar q_i,u_i)\|_2$,
and let $s_{(1)}\le\cdots\le s_{(n)}$ be the sorted scores. For trajectory-level miscoverage $\delta$ over horizon $T$, set $\bar\delta:=\delta/T$ and
$q:=s_{(\lceil (n+1)(1-\bar\delta)\rceil)}$,
with $q=+\infty$ if the index exceeds $n$. The CP set for the next perceived state is
$\mathcal{C}(\bar q,u)
:=
\left\{
\bar q^+\in\mathcal{Q}
\;\middle|\;
\|\bar q^+-f(\bar q,u)\|_2\le r_\textrm{CP}
\right\}$. 
We instantiate SLS on the perceived-state surrogate
$\bar q_{k+1}=f(\bar q_k,u_k)+r_\textrm{CP}I w_k$, with $w_k\in\mathcal{B}^{n_q}$. This is a calibrated model of the full-state estimate used by the controller, rather than an independently identified process-noise model for the latent state $q_k$. Let $\bar{\mathcal{T}}_k^q$ and $\mathcal{T}_k^u$ denote the resulting perceived-state and control reachable tubes. Assuming exchangeability between closed-loop residual scores $\|\bar q_{k+1}-f(\bar q_k,u_k)\|_2$ and the calibration scores, CP and robust SLS ensure (proof in App. \ref{app:proofs}):
\vspace{-1pt}
\begin{theorem}[True-state containment from CP tubes]
\label{thm:perceived_to_true_tubes}
Let $y_{k+1}=q_k+e_k$ with $\|e_k\|_2\le\rho$, and define
$\bar q_k:=y_{k+1}$. Suppose conformal calibration and robust SLS synthesis yield
perceived-state and control tubes $\bar{\mathcal{T}}_k^q$ and $\mathcal{T}_k^u$
satisfying
$\mathbb{P}
    \left[
        \bar q_k\in\bar{\mathcal{T}}_k^q,\;
        u_k\in\mathcal{T}_k^u,\;
        k=0,\ldots,N
    \right]
    \ge 1-\delta$. 
Then the true state and control trajectories satisfy
$\mathbb{P}
    \left[
        q_k\in
        \bar{\mathcal{T}}_k^q\oplus \rho\mathbb{B}^{n_q},\;
        u_k\in\mathcal{T}_k^u,\;
        k=0,\ldots,N
    \right]
    \ge 1-\delta$. 
\end{theorem}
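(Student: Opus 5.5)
The plan is to prove this as a deterministic event inclusion followed by monotonicity of probability. The hypothesis on $e_k$ holds surely, so the probabilistic content of the conclusion is fully inherited from the assumed guarantee on the perceived tubes. Concretely, let
\[
\mathcal{E}:=\{\bar q_k\in\bar{\mathcal{T}}_k^q,\ u_k\in\mathcal{T}_k^u,\ k=0,\ldots,N\},
\]
\[
\mathcal{E}':=\{q_k\in\bar{\mathcal{T}}_k^q\oplus\rho\mathbb{B}^{n_q},\ u_k\in\mathcal{T}_k^u,\ k=0,\ldots,N\}.
\]
It suffices to show $\mathcal{E}\subseteq\mathcal{E}'$. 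Then $\mathbb{P}[\mathcal{E}']\ge\mathbb{P}[\mathcal{E}]\ge 1-\delta$.

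First, fix any outcome in $\mathcal{E}$ and any $k\in\{0,\ldots,N\}$. From the measurement model, $q_k=y_{k+1}-e_k=\bar q_k-e_k$. Since $\bar q_k\in\bar{\mathcal{T}}_k^q$, and $-e_k\in\rho\mathbb{B}^{n_q}$ (because $\|e_k\|_2\le\rho$ and the ball is symmetric), the definition of the Minkowski sum gives $q_k\in\bar{\mathcal{T}}_k^q\oplus\rho\mathbb{B}^{n_q}$. The control components $u_k\in\mathcal{T}_k^u$ are identical in both events. Since $k$ was arbitrary, the outcome lies in $\mathcal{E}'$.

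Second, apply monotonicity of $\mathbb{P}$ as above to conclude.

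The only delicate point is measurability and the role of the noise bound. The bound $\|e_k\|_2\le\rho$ must hold almost surely, for every $k$ in the horizon, rather than merely with high probability. I will state this explicitly, noting that $\rho$ comes from the worst-case camera specification. If it held only with probability $1-\delta'$, a union bound would degrade the conclusion to $1-\delta-\delta'$.

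I will also remark where the $1-\delta$ guarantee itself comes from, without reproving it. Per-step CP coverage at level $1-\bar\delta$ is obtained under the stated exchangeability assumption, and a union bound over $T$ steps gives $1-T\bar\delta=1-\delta$. On the event that every residual lies in its CP set, each residual equals $r_{\mathrm{CP}}w_k$ for some $w_k\in\mathcal{B}^{n_q}$, and the robust SLS tube containment then yields $\mathcal{E}$. This justifies the hypothesis but is not needed for the implication, which is purely set-theoretic.
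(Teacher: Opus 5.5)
Your proposal is correct and follows the paper's proof: on the perceived-tube event, $q_k=\bar q_k-e_k$ with $\|e_k\|_2\le\rho$ places $q_k$ in $\bar{\mathcal{T}}_k^q\oplus\rho\mathbb{B}^{n_q}$, and monotonicity of probability carries over the $1-\delta$ bound. Your explicit event inclusion $\mathcal{E}\subseteq\mathcal{E}'$ and your remark that the noise bound must hold surely, not just with high probability, make precise what the paper's short argument leaves implicit.
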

\vspace{-7.5pt}
Accordingly, we define the true-state tube as
$\mathcal{T}_k^q
    :=
    \bar{\mathcal{T}}_k^q\oplus \rho\mathcal{B}^{n_q}$. 
Thus, to enforce true-state safety constraints
$q_k\in\mathcal{Q}_{\mathrm{safe}}$, it suffices to impose the tightened
perceived-state condition $\bar{\mathcal{T}}_k^q
    \subseteq
    \mathcal{Q}_{\mathrm{safe}}\ominus \rho\mathcal{B}^{n_q}$. 
State-dependent uncertainty bounds can be incorporated by replacing the scalar radius
$r_\textrm{CP}$ with a calibrated state-dependent bound, e.g., by adapting
\cite{srinivasan2026safety} to the perceived-state setting.

\vspace{-6pt}
\section{Results}\label{sec:results}
\vspace{-4pt}
We compare CORD-SLS and APG against MPPI \cite{williams2017model} and model-free RL methods (namely, PPO \cite{schulman2017proximal}) on deformable manipulation, empirically validate the robustness of our output-feedback solver, and evaluate the efficacy of our method on hardware.

\begin{table}[t]
\centering
\small
\begin{tabular}{llrccc}
\toprule
Task & Method & \makecell{Dynamics\\Evaluations Count} & \makecell{Execution Time\\Per-Step [s]} & Safety Rate [\%] & Goal Error [m] \\
\midrule
\multirow{4}{*}{Lift Rope}
  & Ours  &     5400 & 0.050  $\pm$ 0.001  & 100.0 $\pm$ 0.0 & 0.0548 $\pm$ 0.0003 \\
  & MPPI  &  1024000 & 0.177  $\pm$ 0.043  & 100.0 $\pm$ 0.0 & 0.0681 $\pm$ 0.0038 \\
  & PPO   & 16000000 & 0.0004 $\pm$ 0.0001 &  96.5 $\pm$ 0.2 & 0.0587 $\pm$ 0.0016 \\
  & APG   & 16000000 & 0.0004 $\pm$ 0.0001 & 100.0 $\pm$ 0.0 & 0.0573 $\pm$ 0.0017 \\
\midrule
\multirow{4}{*}{Drag Rope}
  & Ours  &    13500 & 0.070  $\pm$ 0.012  & 99.9 $\pm$ 0.1 & 0.0360 $\pm$ 0.0077 \\
  & MPPI  &  2560000 & 0.234  $\pm$ 0.018  &  1.0 $\pm$ 0.1 & 0.0814 $\pm$ 0.0034 \\
  & PPO   & 40000000 & 0.0004 $\pm$ 0.0001 & 97.0 $\pm$ 0.4 & 0.0320 $\pm$ 0.0092 \\
  & APG   & 40000000 & 0.0004 $\pm$ 0.0001 & 93.0 $\pm$ 0.4 & 0.0302 $\pm$ 0.0061 \\
\midrule
\multirow{2}{*}{Fold cloth}
  & Ours  &  3400 & 0.758 $\pm$ 0.107 & 100.0 $\pm$ 0.0 & 0.0611 $\pm$ 0.0072 \\
  & MPPI  & 64000 & 18.81 $\pm$ 6.11  & 64.0 $\pm$  6.1 & 0.0738 $\pm$ 0.0079 \\
\midrule
\multirow{2}{*}{Flatten cloth}
  & Ours  &  3400 & 0.711 $\pm$ 0.077 & 100.0 $\pm$ 0.0 & 0.0234 $\pm$ 0.0089 \\
  & MPPI  & 64000 & 11.10 $\pm$ 4.66  &  80.0 $\pm$ 0.2 & 0.0526 $\pm$ 0.0102 \\
\bottomrule
\end{tabular}
\caption{Runtime and performance metrics across tasks.}
\vspace{-20pt}
\label{tab:task_results}
\end{table}
\vspace{-10pt}

\paragraph{Simulation Benchmark Comparison}
 See App. \ref{app:simulator} for baseline comparisons of our differentiable simulator against prior simulators \cite{chen2024differentiable, chen2022daxbench} in terms of accuracy and runtime. We show that our simulator achieves comparable accuracy while accelerating forward simulation and Jacobian computation by up to 700$\times$ and 2000$\times$, respectively. This efficiency, along with the integration of contact smoothing, enables real-time control rates and low goal error in our deformable object planning experiments, which we discuss next.

\paragraph{Planning Benchmark Comparison on Deformable Manipulation Tasks}
We benchmark our approach on four deformable manipulation tasks shown in Fig.~\ref{fig:timelapse}. In the lift rope task, a pair of grippers must establish contact with the rope to activate the holonomic constraint required to lift it. Our method leverages gradients from the dynamics to guide actions toward making contact. In contrast, MPPI requires substantially more dynamics evaluations to identify contact-making actions, resulting in higher execution times. Among the RL methods, APG leverages differentiable dynamics to compute policy gradients, whereas PPO is warm-started with 10 expert demonstrations generated by CORD-SLS to enable it to solve the task within the same interaction budget. Without this warm start, the PPO policy fails to make contact with the rope after the same number of environment interactions, and thus fails to complete the task.

In the drag rope task, the gripper must establish contact with the rope and transport it to a goal location while avoiding an obstacle. CORD-SLS explicitly accounts for disturbances and computes trajectory tubes (Fig.~\ref{fig:drag-rope}, see App. \ref{app:mpc_tubes} for full tube snapshots), enabling robust obstacle avoidance. Although obstacle avoidance is encoded as a penalty in the MPPI cost function, the best sampled trajectories often pass through the obstacle, leading to a very low safety rate. Increasing the obstacle penalty further causes the rope to remain near its initial position. APG finds a lower-cost policy than PPO with fewer simulator samples (Fig. \ref{fig:rl_convergence}b, App. \ref{app:rl_convergence}), but both RL methods exhibit constraint violations because they do not explicitly account for disturbances during execution.

For the cloth folding and flattening tasks, despite the high-dimensional state space (300 DoF), the GPU-parallelized simulator and SLS solver allow each MPC update to be computed in under one second (see App. \ref{app:mpc_tubes} for reachable tubes). MPPI, by comparison, requires significantly more dynamics evaluations to achieve comparable performance, resulting in longer execution times as well as higher goal error and constraint violation rates. We do not include RL baselines for these tasks because the 300-DoF cloth system lies in a regime where online trajectory optimization remains tractable, while learning a neural policy becomes prohibitively sample-intensive.

\begin{figure}
    \centering
    \begin{subfigure}[b]{3.4cm}
        \centering
        \adjincludegraphics[width=1.65cm, trim={{0.17\width} {0.3\height} {0.12\width} {0.35\height}}, clip]{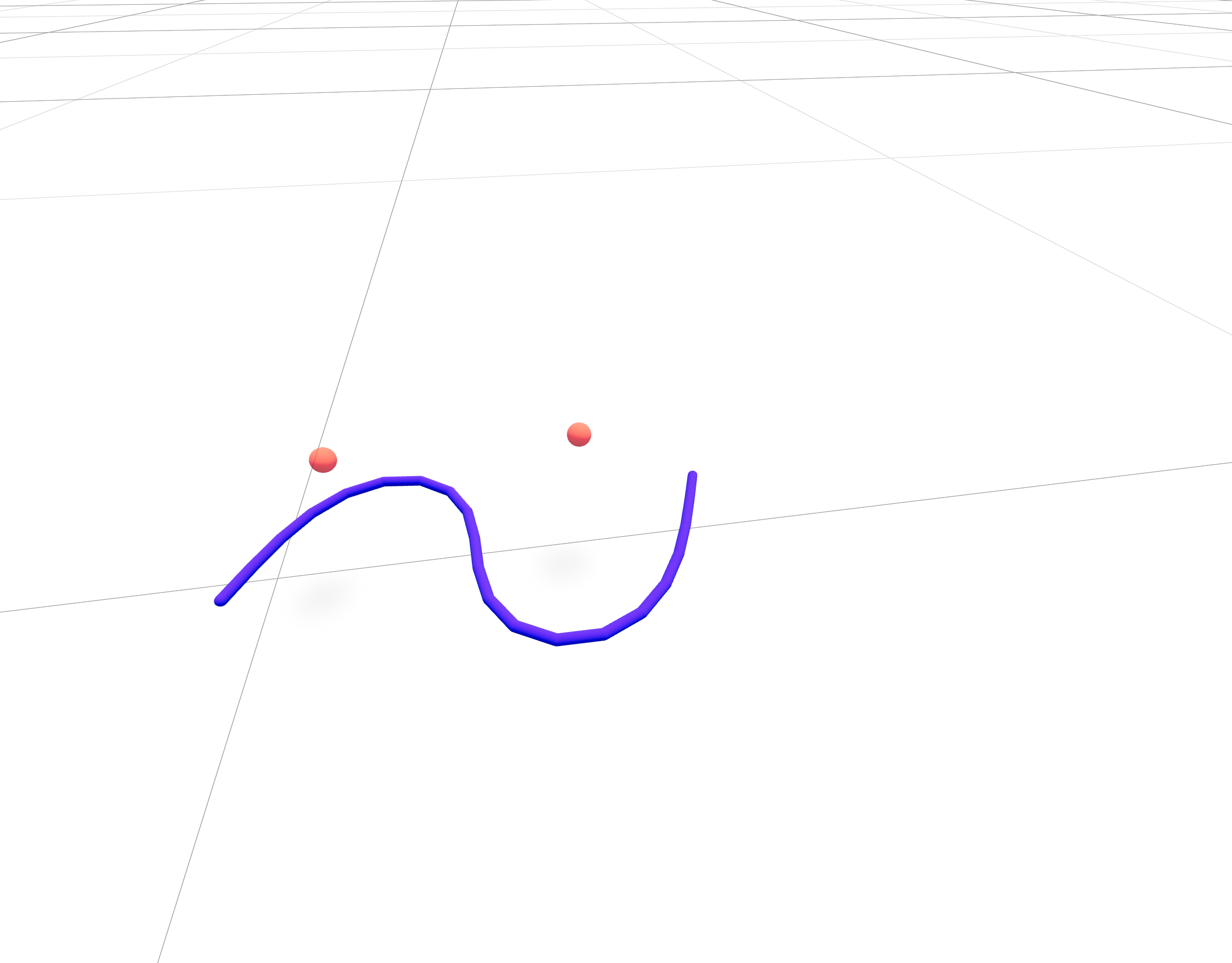}
        \adjincludegraphics[width=1.65cm, trim={{0.17\width} {0.3\height} {0.12\width} {0.35\height}}, clip]{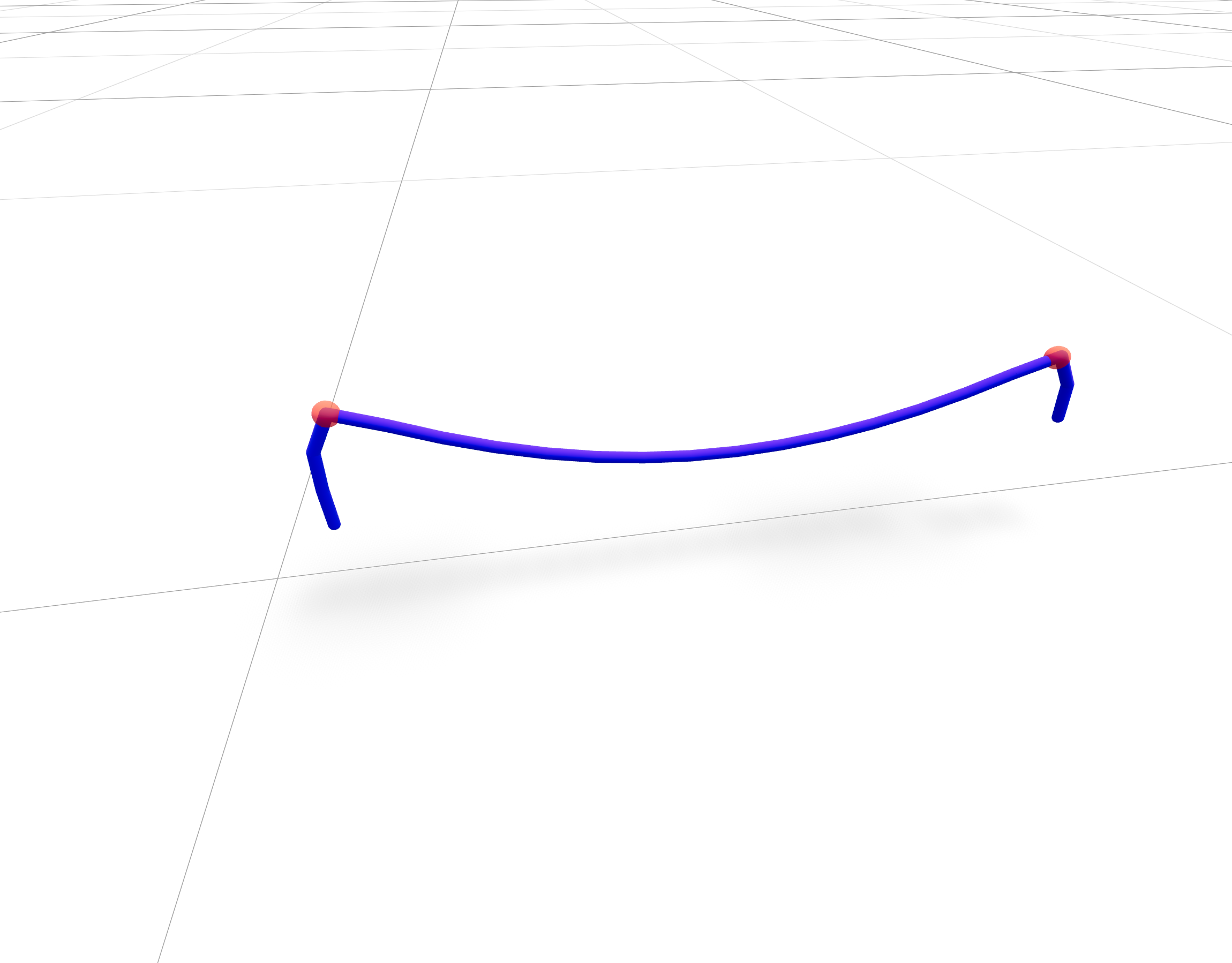}
        \subcaption{}
    \end{subfigure}
    \hfill
    \vsubdividershort
    \hfill
    \begin{subfigure}[b]{10.3cm}
        \centering
        \adjincludegraphics[width=1.65cm, trim={{0.07\width} {0.2\height} {0.07\width} {0.35\height}}, clip]{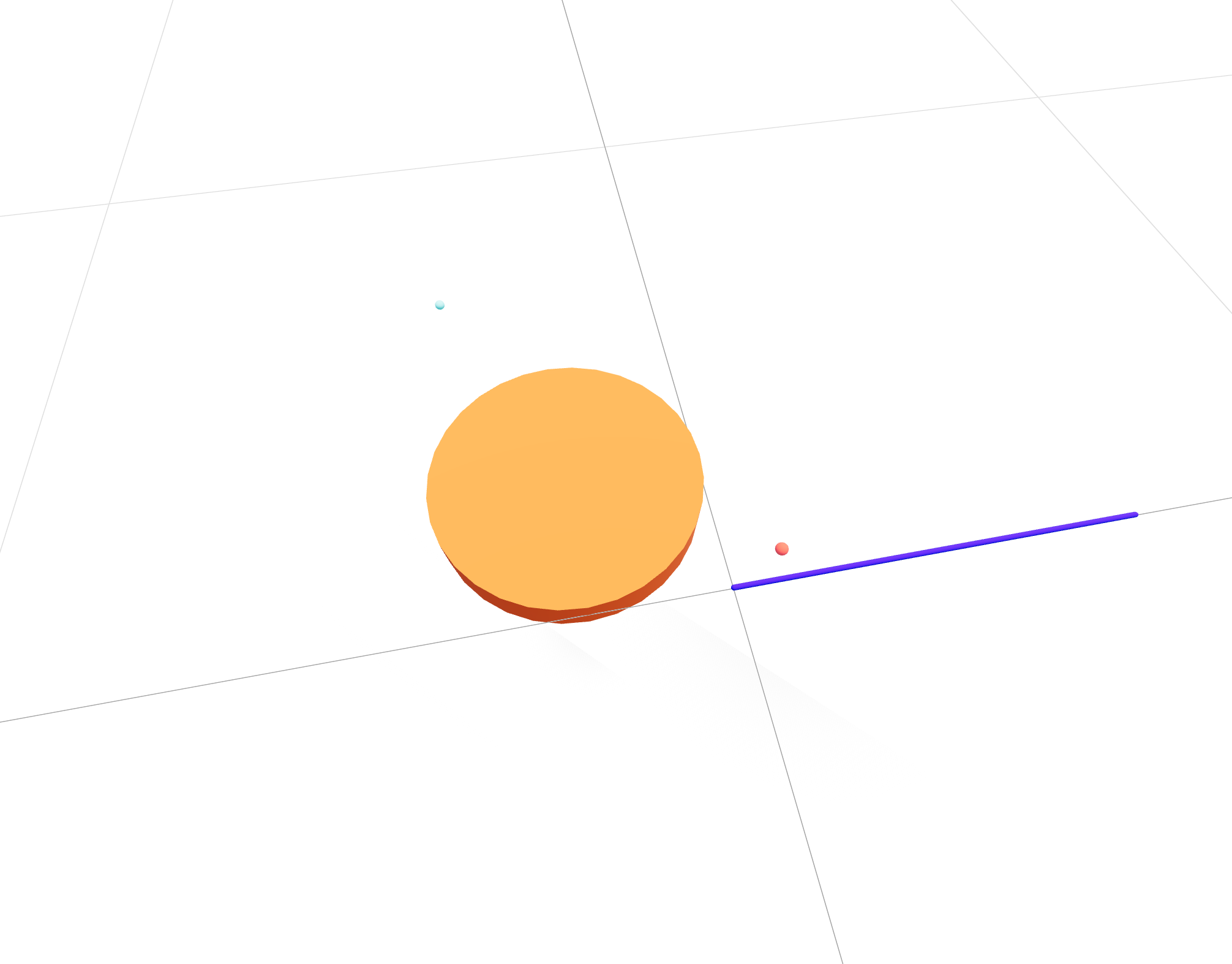}
        \adjincludegraphics[width=1.65cm, trim={{0.07\width} {0.2\height} {0.07\width} {0.35\height}}, clip]{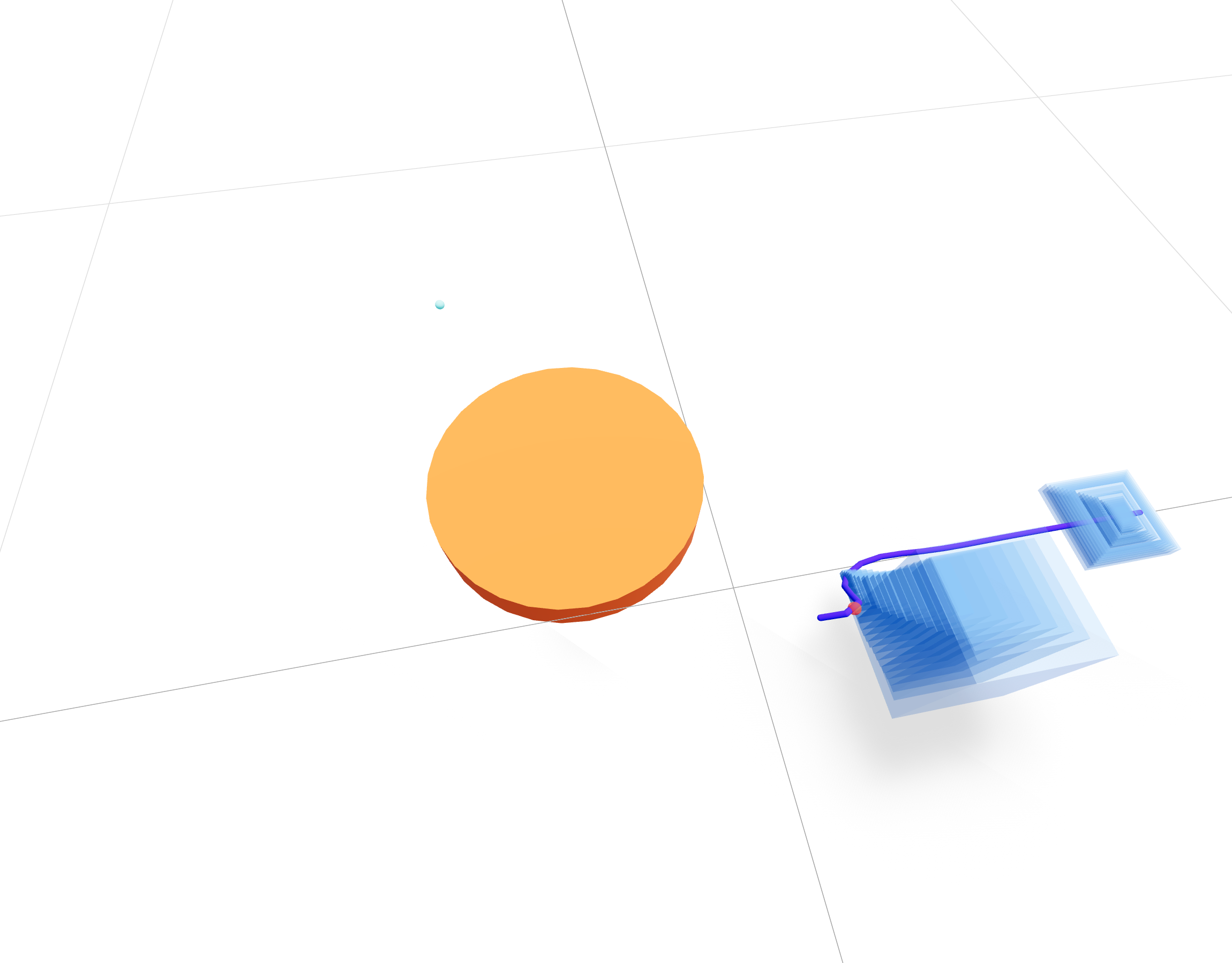}
        \adjincludegraphics[width=1.65cm, trim={{0.07\width} {0.2\height} {0.07\width} {0.35\height}}, clip]{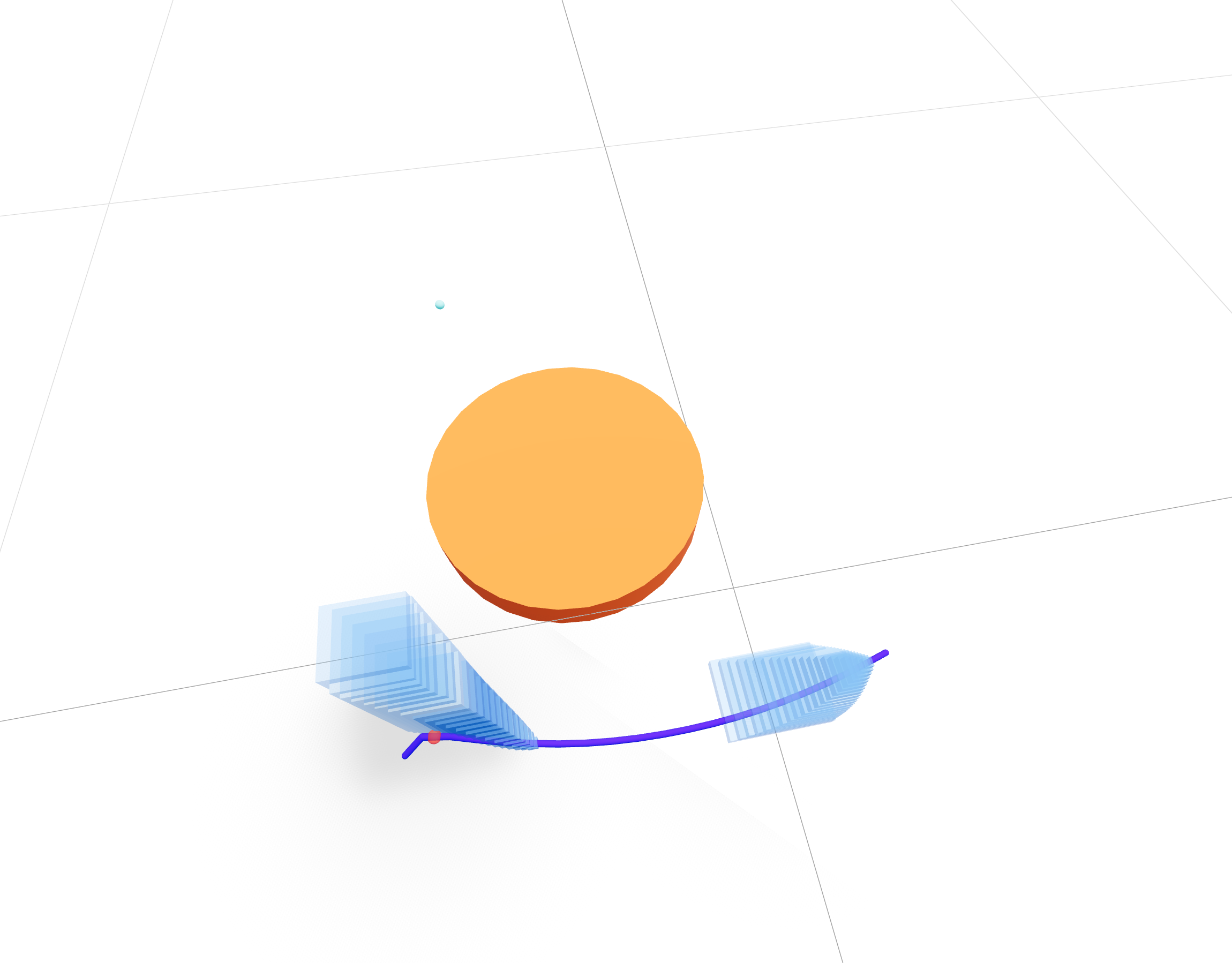}
        \adjincludegraphics[width=1.65cm, trim={{0.07\width} {0.2\height} {0.07\width} {0.35\height}}, clip]{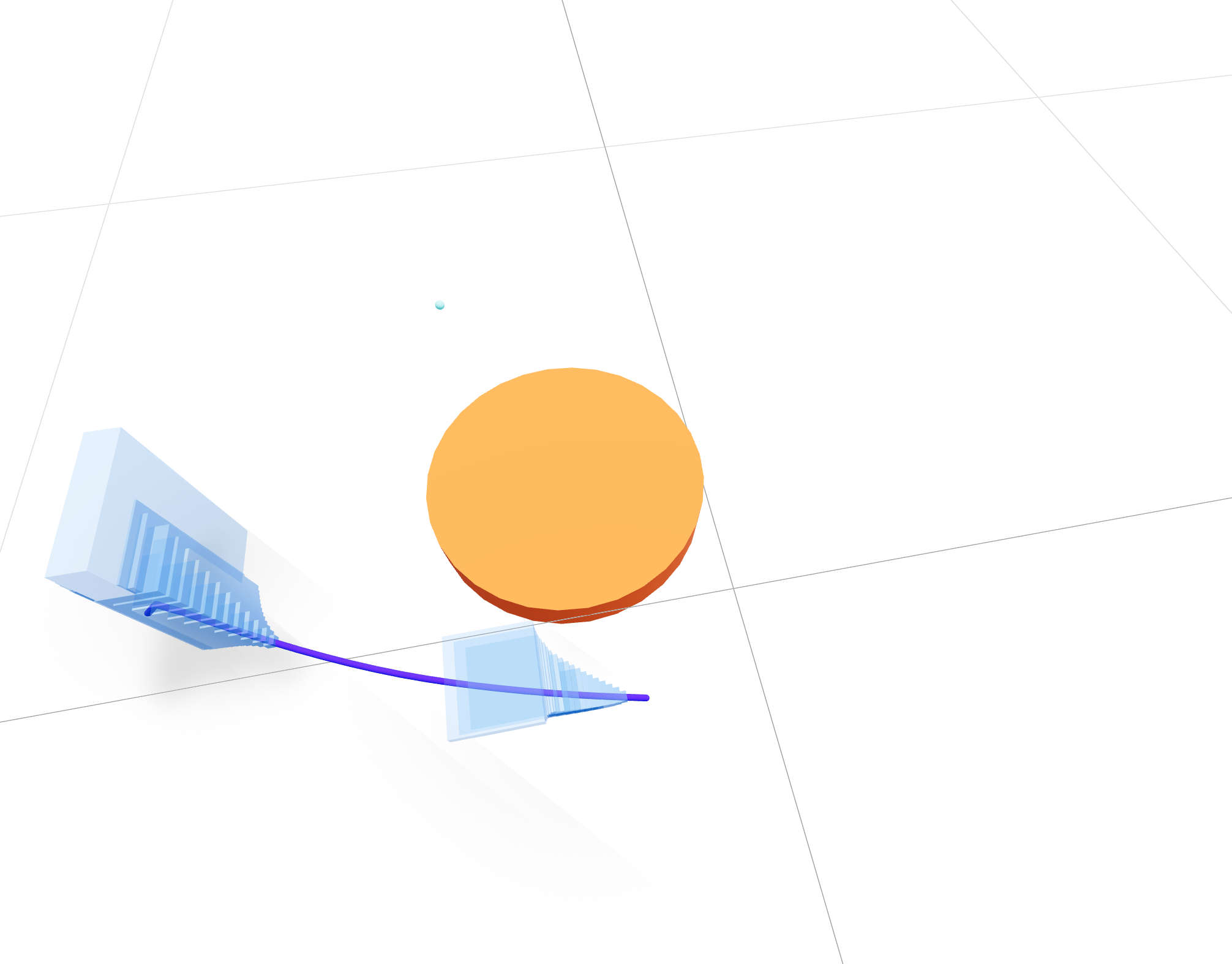}
        \adjincludegraphics[width=1.65cm, trim={{0.07\width} {0.2\height} {0.07\width} {0.35\height}}, clip]{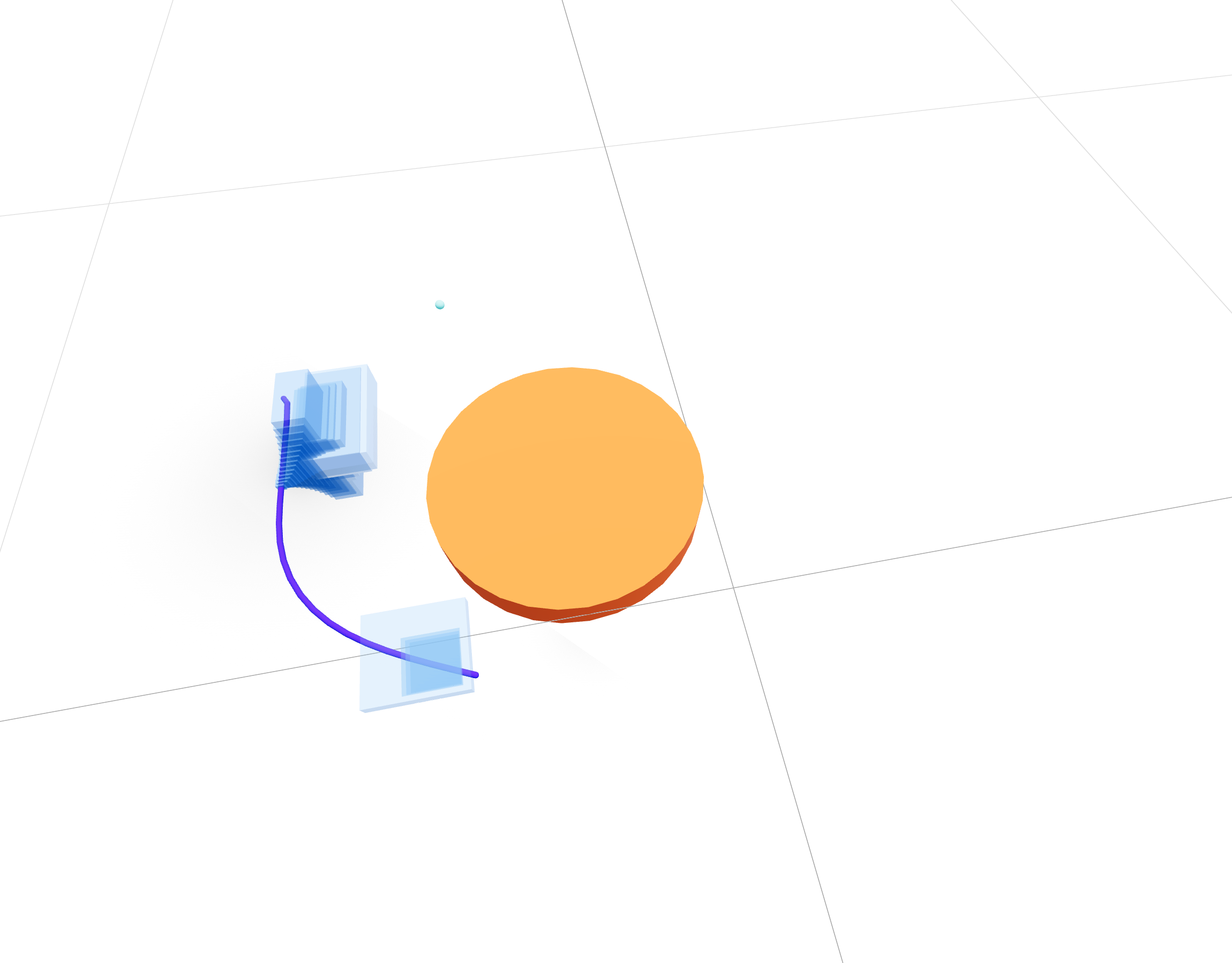}
        \adjincludegraphics[width=1.65cm, trim={{0.07\width} {0.2\height} {0.07\width} {0.35\height}}, clip]{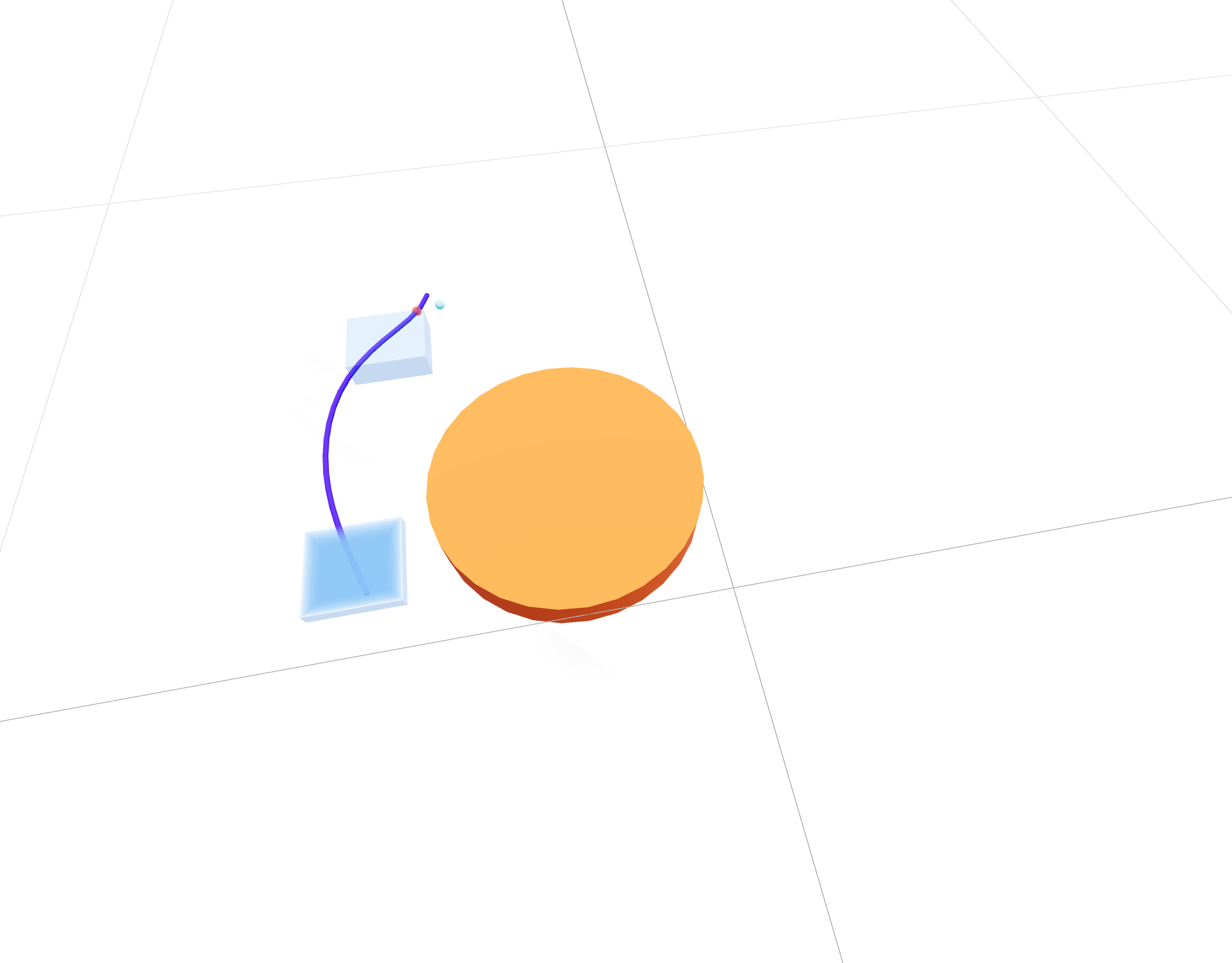}
        \subcaption{}
        \label{fig:drag-rope}
    \end{subfigure}
    \\
    \begin{subfigure}[b]{6.9cm}
        \centering
        \adjincludegraphics[width=1.65cm, trim={{0.23\width} {0.15\height} {0.15\width} {0.1\height}}, clip]{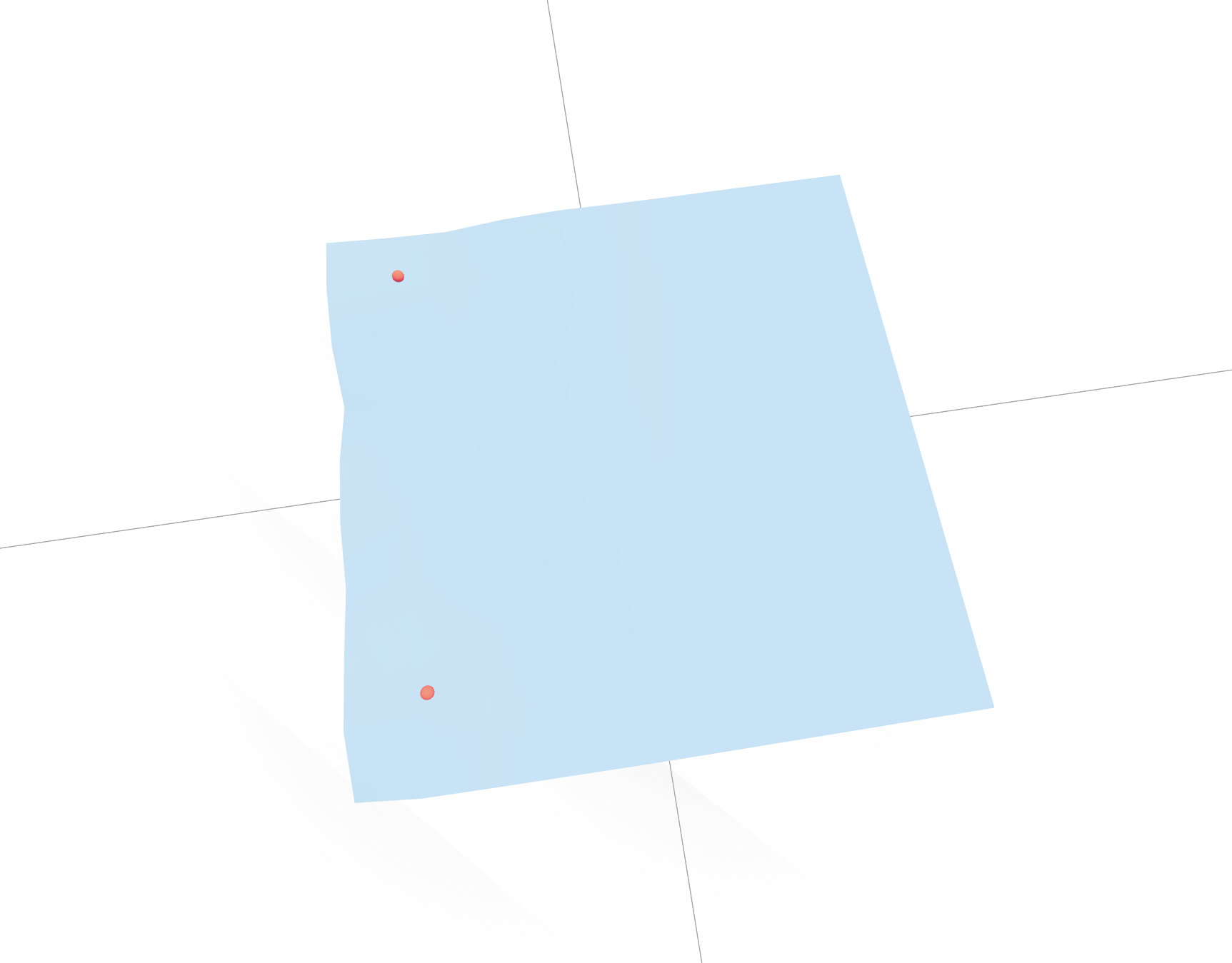}
        \adjincludegraphics[width=1.65cm, trim={{0.23\width} {0.15\height} {0.15\width} {0.1\height}}, clip]{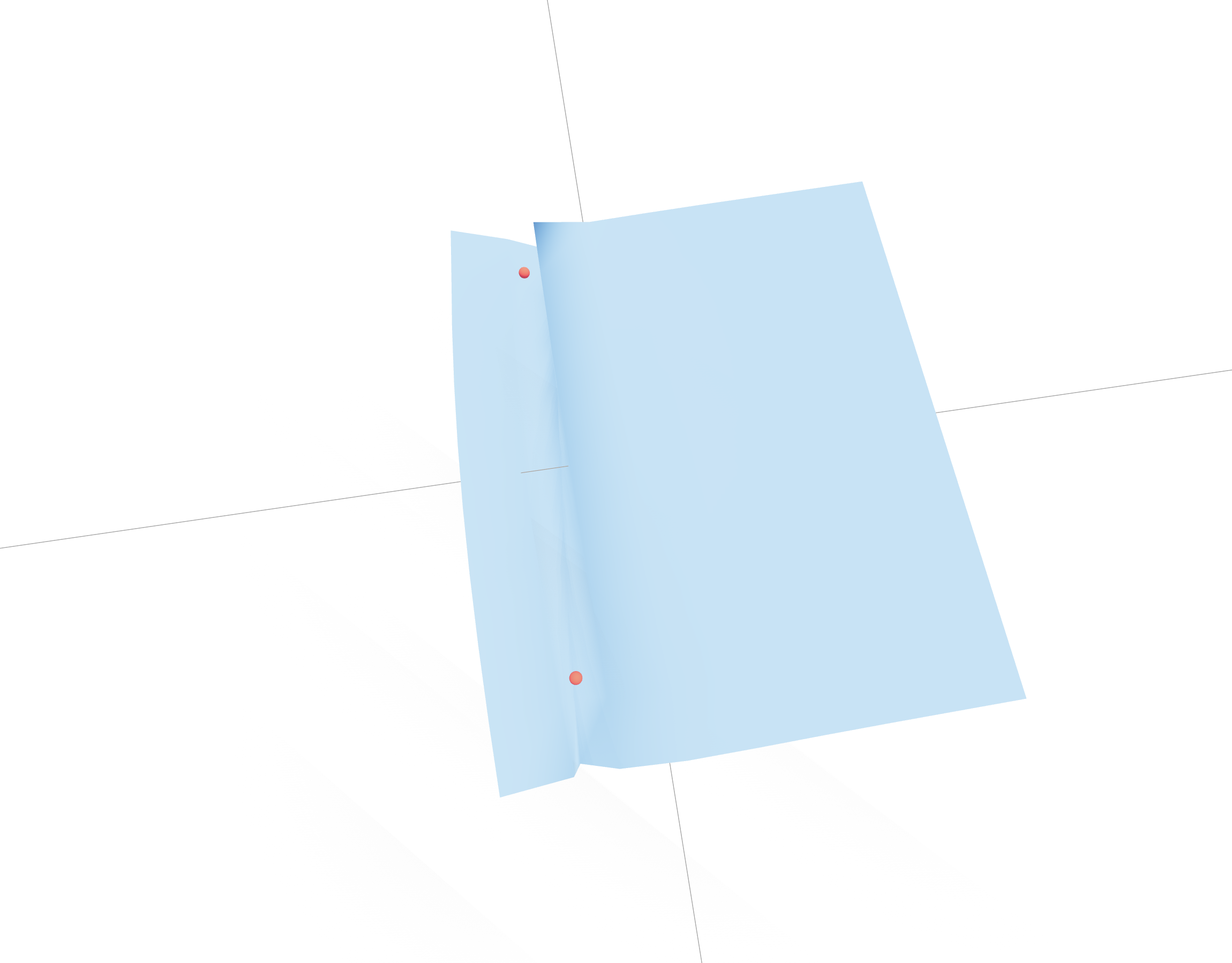}
        \adjincludegraphics[width=1.65cm, trim={{0.23\width} {0.15\height} {0.15\width} {0.1\height}}, clip]{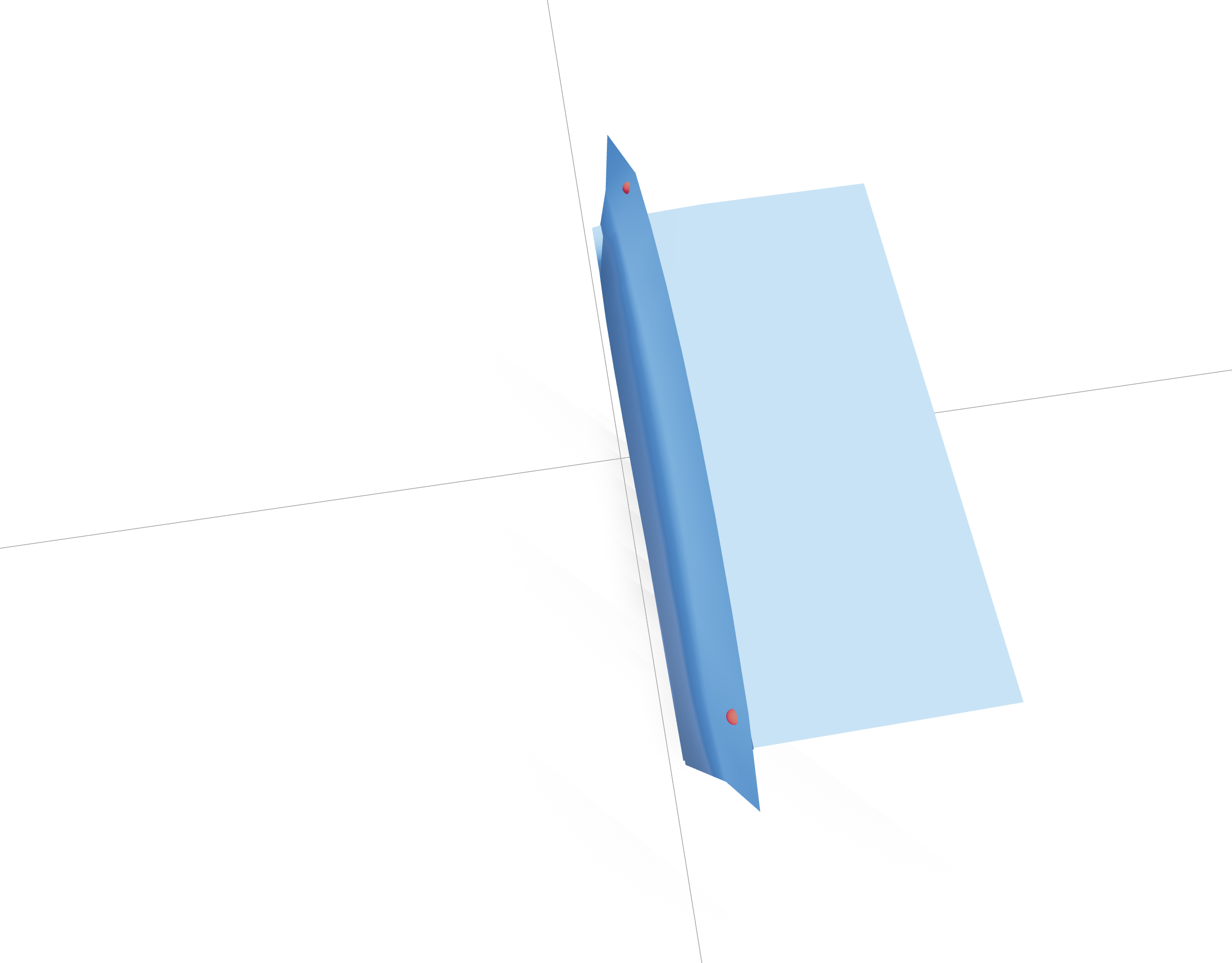}
        \adjincludegraphics[width=1.65cm, trim={{0.23\width} {0.15\height} {0.15\width} {0.1\height}}, clip]{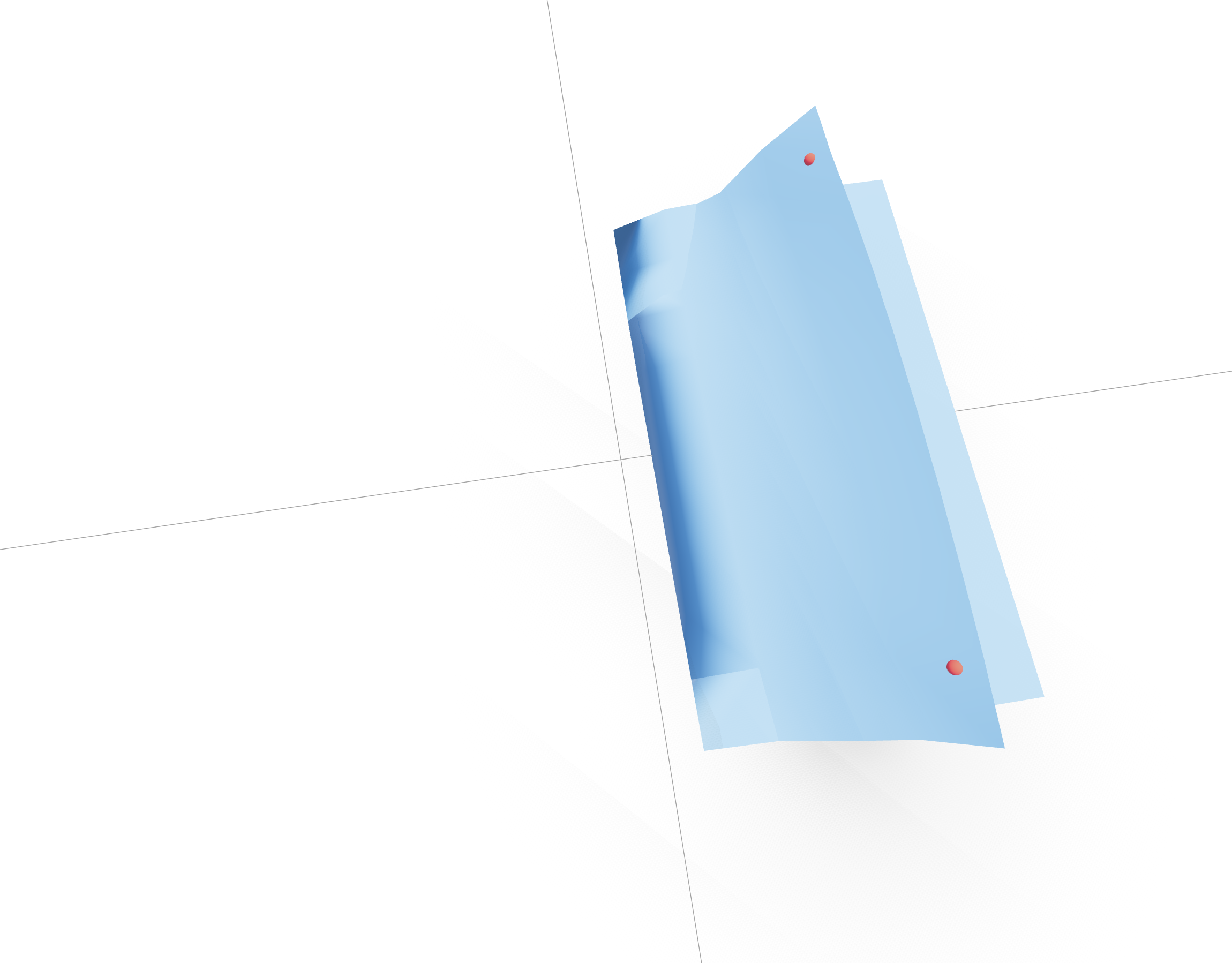}
        \subcaption{}
        \label{fig:fold-cloth}
    \end{subfigure}
    \hfill
    \vsubdivider
    \hfill
    \begin{subfigure}[b]{6.8cm}
        \centering
        \adjincludegraphics[width=1.64cm, trim={{0.23\width} {0.15\height} {0.15\width} {0.1\height}}, clip]{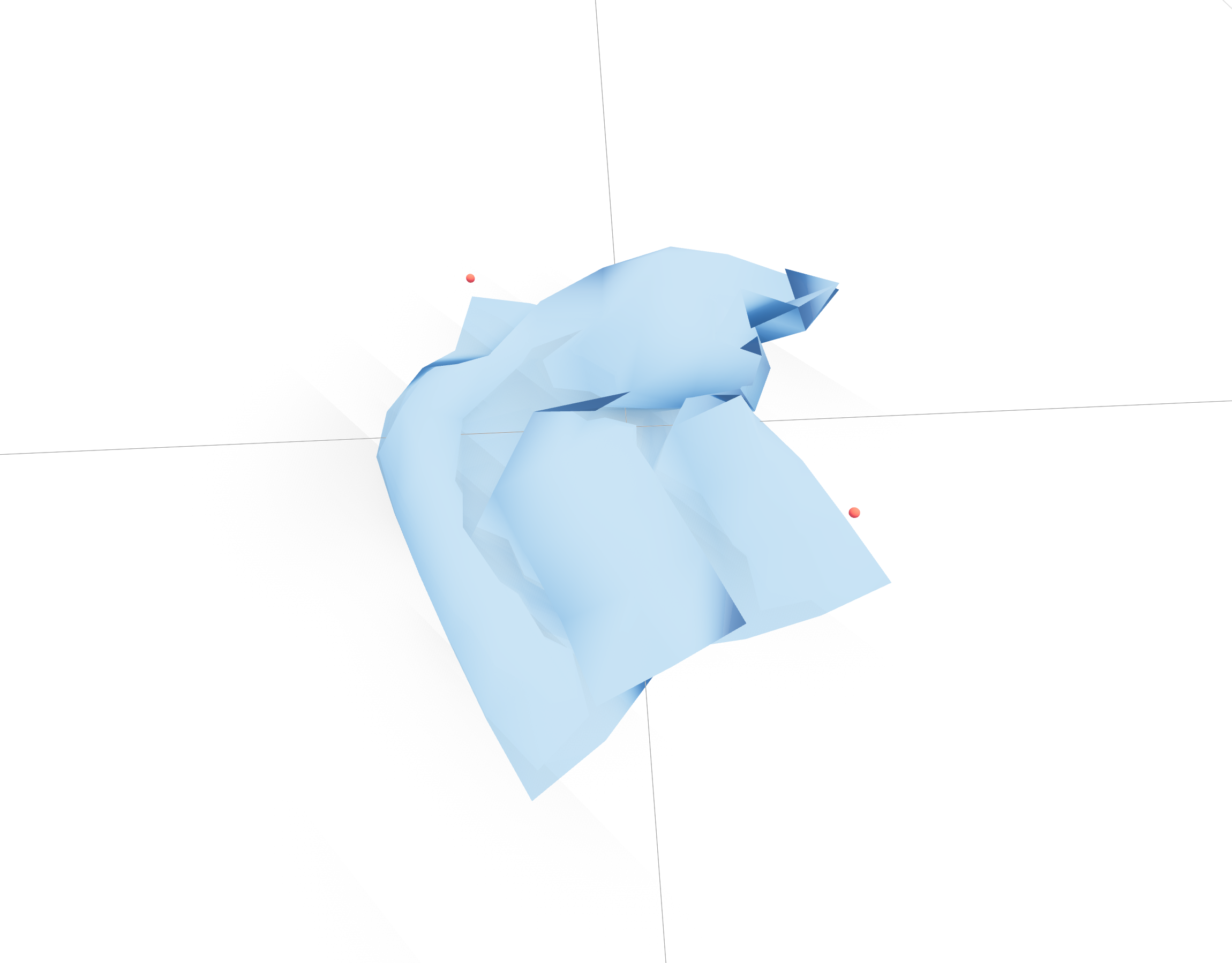}
        \adjincludegraphics[width=1.64cm, trim={{0.23\width} {0.15\height} {0.15\width} {0.1\height}}, clip]{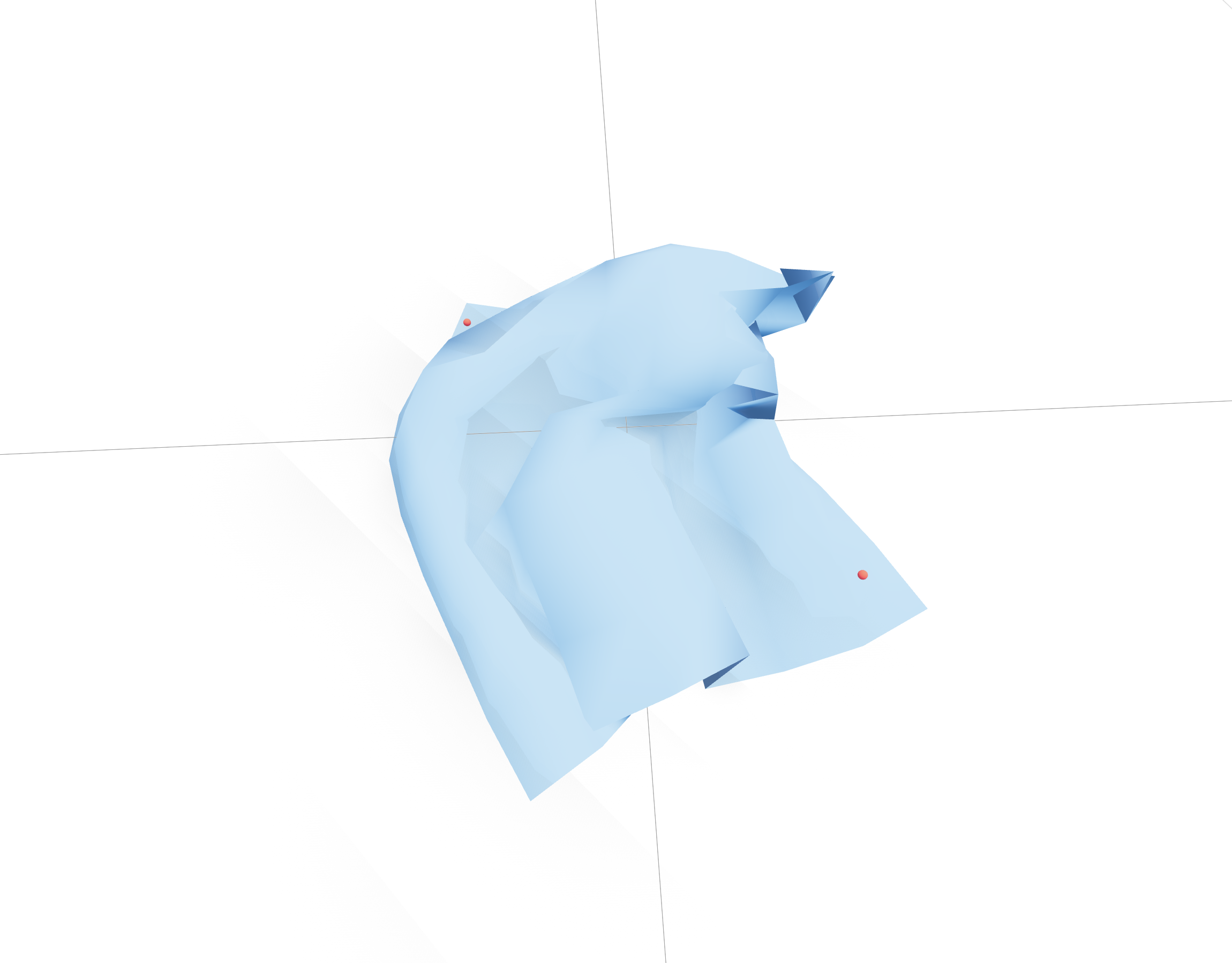}
        \adjincludegraphics[width=1.64cm, trim={{0.23\width} {0.15\height} {0.15\width} {0.1\height}}, clip]{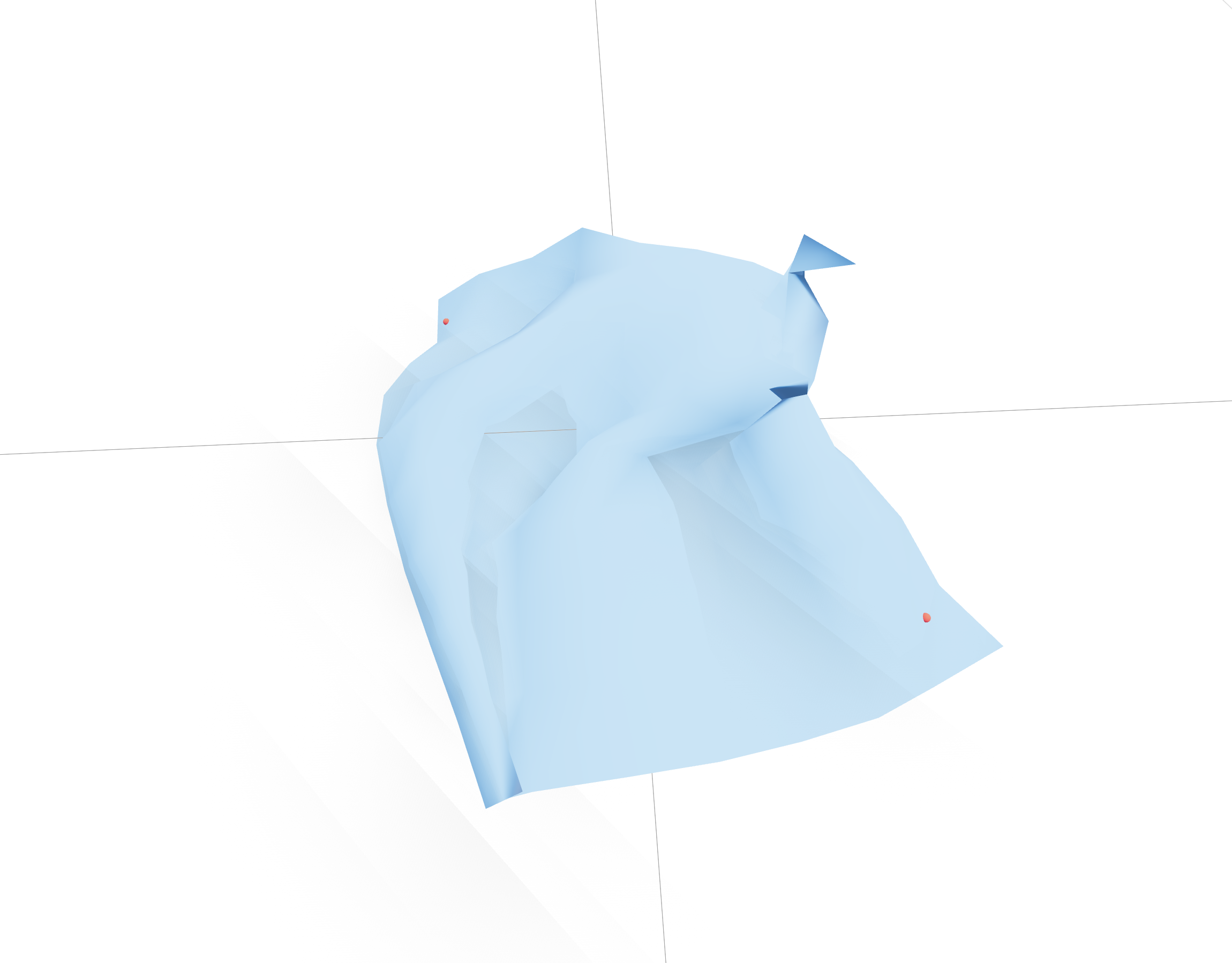}
        \adjincludegraphics[width=1.64cm, trim={{0.23\width} {0.15\height} {0.15\width} {0.1\height}}, clip]{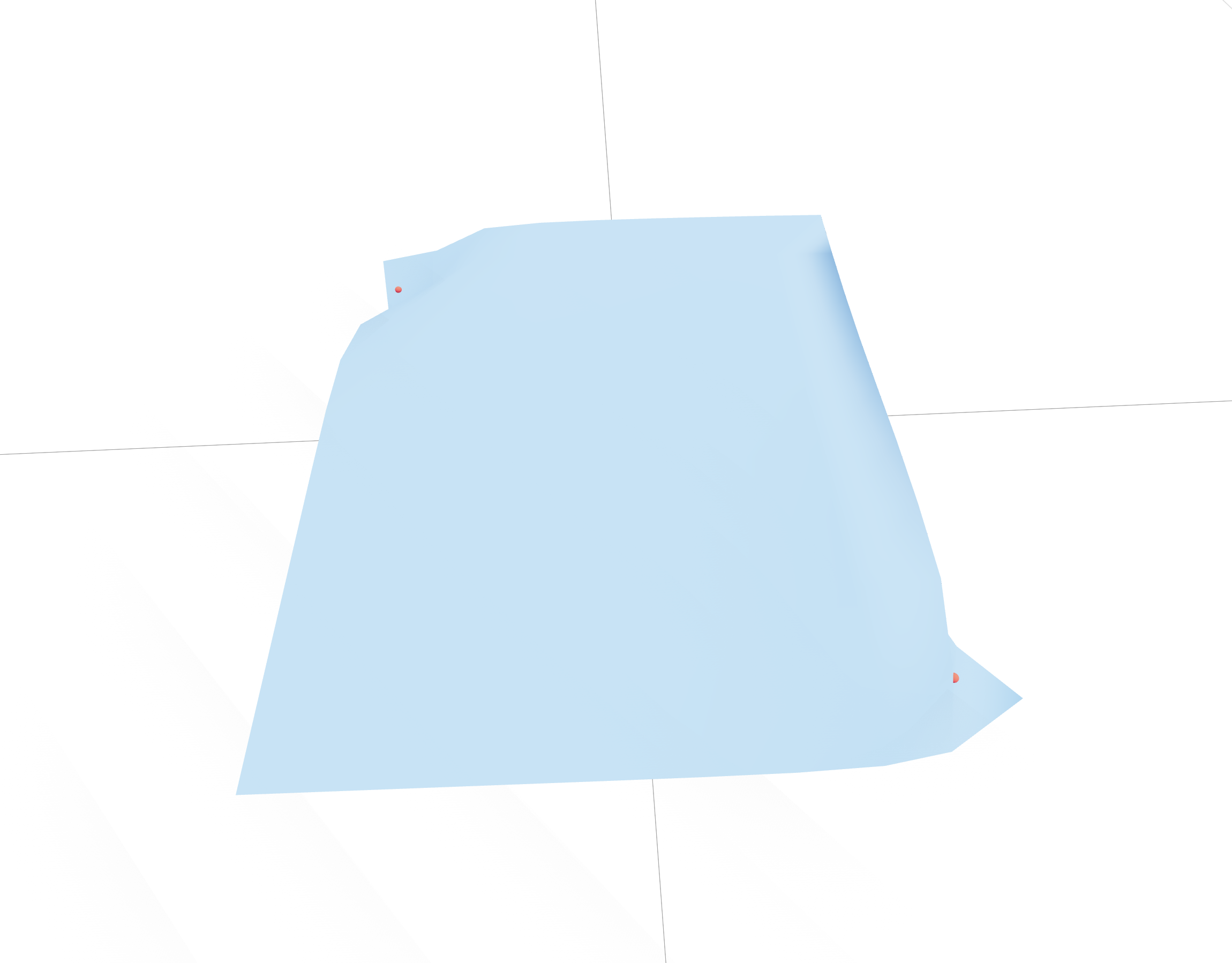}
        \subcaption{}
    \end{subfigure}
    \vspace{-6pt}
    \caption{Timelapse of our method on each task. (a) Rope lifting, (b) Rope dragging around an obstacle with uncertainty tubes, (c) Cloth folding, (d) Cloth flattening. By formulating control as trajectory optimization, our method generalizes across diverse tasks without task-specific retraining.}
    \vspace{-3pt}
    \label{fig:timelapse}
\end{figure}

\begin{figure}
    \centering
    \includegraphics[width=\linewidth]{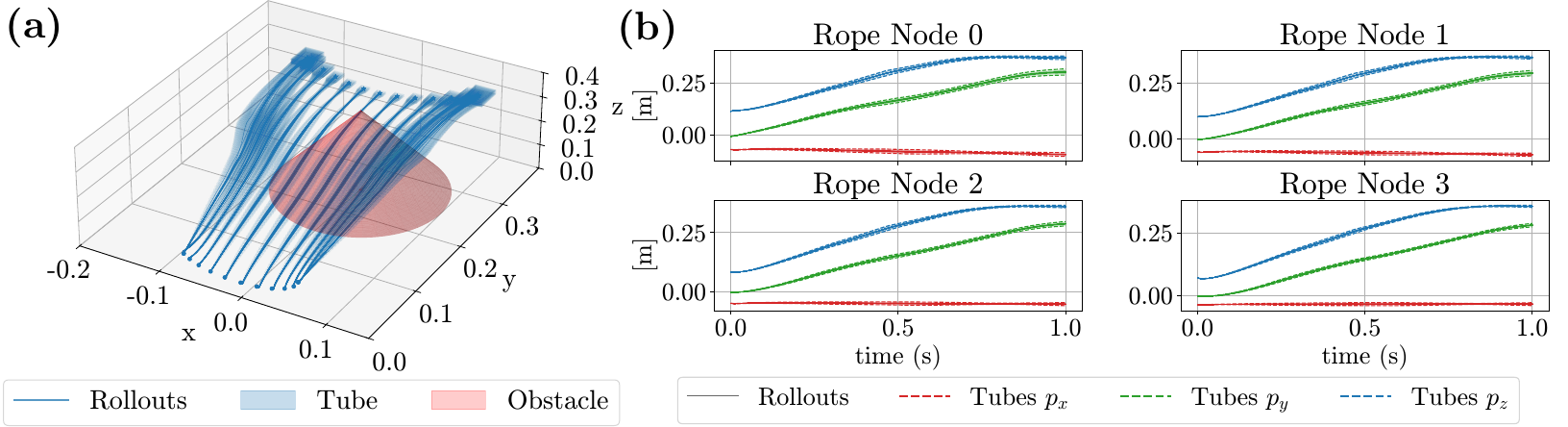}
    \vspace{-12pt}
    \caption{\textbf{(a)}: Solved trajectory for a rope manipulation task using CORD-SLS. \textbf{(b)}: Robust tubes for four rope nodes and their corresponding rollouts. All rollouts remain within their tubes, demonstrating robustness to both dynamical error and measurement uncertainty. See App. \ref{app:output_feedback_tubes} for reachable tubes for all state dimensions.}
    \label{fig:output_feedback}
    \vspace{-10pt}
\end{figure}

\vspace{-10pt}
\paragraph{Output Feedback}
In Fig.~\ref{fig:output_feedback}, we task the controller with robustly navigating a rope over an obstacle with simulated measurement and dynamical noise. In Fig.~\ref{fig:output_feedback}\textbf{(a)} we show the robust tubes and trajectory computed by our solver, which account for dynamical and measurement uncertainty. In Fig.~\ref{fig:output_feedback}\textbf{(b)} we show a subset of the robust tubes for 4 rope nodes and their rollouts, further illustrating the controller remains robust to both measurement and dynamical noise throughout execution. 
\vspace{-12pt}
\paragraph{Hardware} \label{sec:hardware_results}
Lastly, we validate our approach on hardware. As shown in Fig.~\ref{fig:hardware_experiments}, we consider two deformable manipulation tasks: obstacle rope manipulation and cloth folding. For both tasks, we estimate the state using RGB-D observations, where SAM \cite{carion2025sam} initializes and grounds keypoints that are then tracked in real time by TAPNext++ \cite{jung2026tapnext++}. We then use these keypoint poses to fit and reconstruct the full object state. See App. \ref{app:perception} for more details on the perception pipeline. The experiments are conducted with two KUKA LBR iiwa manipulators with the MPC solver running on an NVIDIA RTX4070 Ti GPU. To calibrate the CP bound, we apply 100 random controls on hardware and compare the predicted and measured one-step state transitions. We then use a rounded empirical error radius of 0.03 m, which covers 87\% of the calibration residuals. In Fig.~\ref{fig:hardware_experiments}\textbf{(a)}, we show that, due to the contact-smoothness and differentiability of our simulator, the manipulators are able to make contact with the rope and navigate it over the obstacle. In Fig.~\ref{fig:hardware_experiments}\textbf{(b)} we visualize the robust tubes generated at selected timestamps which account for both dynamical and measurement uncertainty, showing that we remain safe despite uncertainty. Finally, in Fig.~\ref{fig:hardware_experiments}\textbf{(c)} we show the manipulators successfully completing a cloth folding task, demonstrating that our simulator and solver extend to high-dimensional deformable object tasks.

\vspace{-6pt}
\section{Discussion, Limitations, and Conclusion}
\vspace{-3pt}
This paper presents CORD-SLS, a framework for safe, contact-rich deformable object manipulation that integrates differentiable simulation, robust GPU-accelerated output-feedback control, and data-driven uncertainty calibration. Through smooth contact dynamics, differentiable simulation, and conformal prediction, CORD-SLS enables real-time trajectory optimization for high-dimensional deformable objects through intermittent contact, while maintaining high-probability safety. 
\vspace{-10pt}
\paragraph{Limitations} Contact smoothing introduces errors near contact transitions, which can be handled by \cite{li2026certified}. Deformable-object perception remains challenging under occlusion and complex topologies, which can be addressed through more sophisticated perception pipelines, such as multi-view RGB-D sensing or learned output equations. Lastly, memory usage for higher-dimensional systems and long horizons becomes a bottleneck, motivating the use of low-rank SLS response operators.

\clearpage

\bibliography{contents/references}

\newpage
\appendix

\setcounter{theorem}{0}
\renewcommand{\thetheorem}{\Alph{section}.\arabic{theorem}}

\begin{center}
    \Large \textbf{Appendices}
\end{center}

In the following, we provide an overview of our appendices. In App. \ref{app:proofs}, we provide a proof of Theorem \ref{thm:perceived_to_true_tubes}, which shows how perceived-state conformal tubes imply true-state containment after inflating the tubes by the perception-error bound. In App. \ref{app:perception}, we provide additional details on the hardware perception pipelines used for the rope manipulation and cloth folding experiments, including the camera setup, SAM3-based segmentation, TAPNext++ keypoint tracking, RGB-D reconstruction, spline fitting, and state reconstruction procedures. In App. \ref{app:rl_convergence}, we provide additional reinforcement learning training results comparing PPO and APG on the Lift Rope and Drag Rope tasks, demonstrating the improved sample efficiency enabled by differentiating through our simulator. In App. \ref{app:mpc_tubes}, we provide additional state-feedback MPC tube visualizations for the drag rope and fold cloth tasks, and discuss how replanning affects the relationship between executed trajectories and previously computed robust tubes. In App. \ref{app:output_feedback_tubes}, we provide additional output-feedback tube visualizations for the rope manipulation task, showing rollouts under random dynamical and measurement disturbances when controlled by the robust perception-based controller synthesized by CORD-SLS. In App. \ref{app:simulator}, we compare our differentiable simulator against prior methods in terms of simulation accuracy and speed.

\section{Proofs}\label{app:proofs}

For convenience, we restate Theorem \ref{thm:perceived_to_true_tubes} and then provide the proof.

\begin{theorem}[True-state containment from perceived-state conformal tubes]
\label{thm:perceived_to_true_tubes_app}
Let $y_{k+1}=q_k+e_k$ with $\|e_k\|_2\le\rho$, and define
$\bar q_k:=y_{k+1}$. Suppose conformal calibration and robust SLS synthesis yield
perceived-state and control tubes $\bar{\mathcal{T}}_k^q$ and $\mathcal{T}_k^u$
satisfying
$\mathbb{P}
    \left[
        \bar q_k\in\bar{\mathcal{T}}_k^q,\;
        u_k\in\mathcal{T}_k^u,\;
        k=0,\ldots,N
    \right]
    \ge 1-\delta$. 
Then the true state and control trajectories satisfy
$\mathbb{P}
    \left[
        q_k\in
        \bar{\mathcal{T}}_k^q\oplus \rho\mathbb{B}^{n_q},\;
        u_k\in\mathcal{T}_k^u,\;
        k=0,\ldots,N
    \right]
    \ge 1-\delta$. 
\end{theorem}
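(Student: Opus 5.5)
The plan is a deterministic set-inclusion argument followed by monotonicity of probability. No further probabilistic reasoning should be needed, since the perception-error bound $\|e_k\|_2\le\rho$ is assumed to hold surely (it is a worst-case bound from camera specifications). Let $\Omega_{\mathrm{pc}}$ be the event $\{\bar q_k\in\bar{\mathcal{T}}_k^q,\ u_k\in\mathcal{T}_k^u,\ k=0,\ldots,N\}$. By hypothesis, $\mathbb{P}[\Omega_{\mathrm{pc}}]\ge 1-\delta$. Let $\Omega_{\mathrm{true}}$ be the event whose probability we must bound from below.

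\textbf{Pointwise inclusion.} Fix an outcome in $\Omega_{\mathrm{pc}}$ and an index $k$. By definition, $\bar q_k=y_{k+1}=q_k+e_k$, so $q_k=\bar q_k+(-e_k)$. Since $\|-e_k\|_2=\|e_k\|_2\le\rho$, we have $-e_k\in\rho\mathbb{B}^{n_q}$; here the unit ball is symmetric, and $\mathbb{B}^{n_q}$ is identified with $\mathcal{B}^{n_q}$. Because $\bar q_k\in\bar{\mathcal{T}}_k^q$, the definition of the Minkowski sum gives $q_k\in\bar{\mathcal{T}}_k^q\oplus\rho\mathbb{B}^{n_q}$. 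The control condition $u_k\in\mathcal{T}_k^u$ carries over unchanged. This holds simultaneously for all $k=0,\ldots,N$, so $\Omega_{\mathrm{pc}}\subseteq\Omega_{\mathrm{true}}$.

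\textbf{Monotonicity.} It then follows that $\mathbb{P}[\Omega_{\mathrm{true}}]\ge\mathbb{P}[\Omega_{\mathrm{pc}}]\ge1-\delta$. The set-valued statement passes directly to the safety claim stated after the theorem. If $\bar{\mathcal{T}}_k^q\subseteq\mathcal{Q}_{\mathrm{safe}}\ominus\rho\mathcal{B}^{n_q}$, then $\bar{\mathcal{T}}_k^q\oplus\rho\mathcal{B}^{n_q}\subseteq\mathcal{Q}_{\mathrm{safe}}$ by the standard Pontryagin-difference property $(X\ominus Y)\oplus Y\subseteq X$.

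\textbf{Main obstacle.} The only subtle point is measurability and the quantifier structure. The bound $\|e_k\|_2\le\rho$ must hold almost surely for every $k$, rather than merely with high probability, so that no extra failure mass is added to $\delta$. If it held only with probability $1-\delta_e$, I would instead apply a union bound to obtain $1-\delta-\delta_e$. I would state the almost-sure assumption explicitly at the start of the proof. I would also check that $\Omega_{\mathrm{true}}$ is an event, which holds because the tubes are closed sets and $q_k$ is measurable.
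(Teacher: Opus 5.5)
Your proposal is correct and matches the paper's proof: on the perceived-state containment event, $q_k=\bar q_k-e_k$ with $\|e_k\|_2\le\rho$ gives $q_k\in\bar{\mathcal{T}}_k^q\oplus\rho\mathbb{B}^{n_q}$ for every $k$, and the $1-\delta$ bound then transfers by monotonicity of probability. You also state explicitly that $\|e_k\|_2\le\rho$ must hold surely for all $k$, and that otherwise a union bound would cost an extra $\delta_e$; the paper's one-line argument relies on this assumption without saying so.
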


\begin{proof}
On the event that
$\bar q_k\in\bar{\mathcal{T}}_k^q$ and $u_k\in\mathcal{T}_k^u$ for all $k$, we have
\[
    q_k=\bar q_k-e_k .
\]
Since $\|e_k\|_2\le\rho$,
\[
    q_k\in \bar{\mathcal{T}}_k^q\oplus \rho\mathbb{B}^{n_q}
\]
for all $k$. The perceived-state containment event has probability at least
$1-\delta$, so the inflated true-state containment event has the same probability
lower bound.
\end{proof}

\section{Perception}\label{app:perception}

We describe in more detail the hardware experimental setups and perception architecture for the rope manipulation experiment, followed by the cloth folding experiment. For both experiments, we use two Kuka iiwa manipulators with Schunk actuators for gripping. We use two computers, one equipped with a NVIDIA RTX4090, in charge of running the perception pipeline and communicating with the manipulators, and another equipped with an NVIDIA RTX4070 Ti Super dedicated to running the solver. The two computers communicate via ROS2 through an Ethernet connection.

We mount two Orbbec Gemini 336L depth cameras to the workspace, with one camera facing the front of the workspace and the other facing the back. Both cameras and manipulators are calibrated to report measurements in a common workstation frame.

\subsection{Rope}

\begin{figure}[H]
    \centering
    \includegraphics[width=\linewidth]{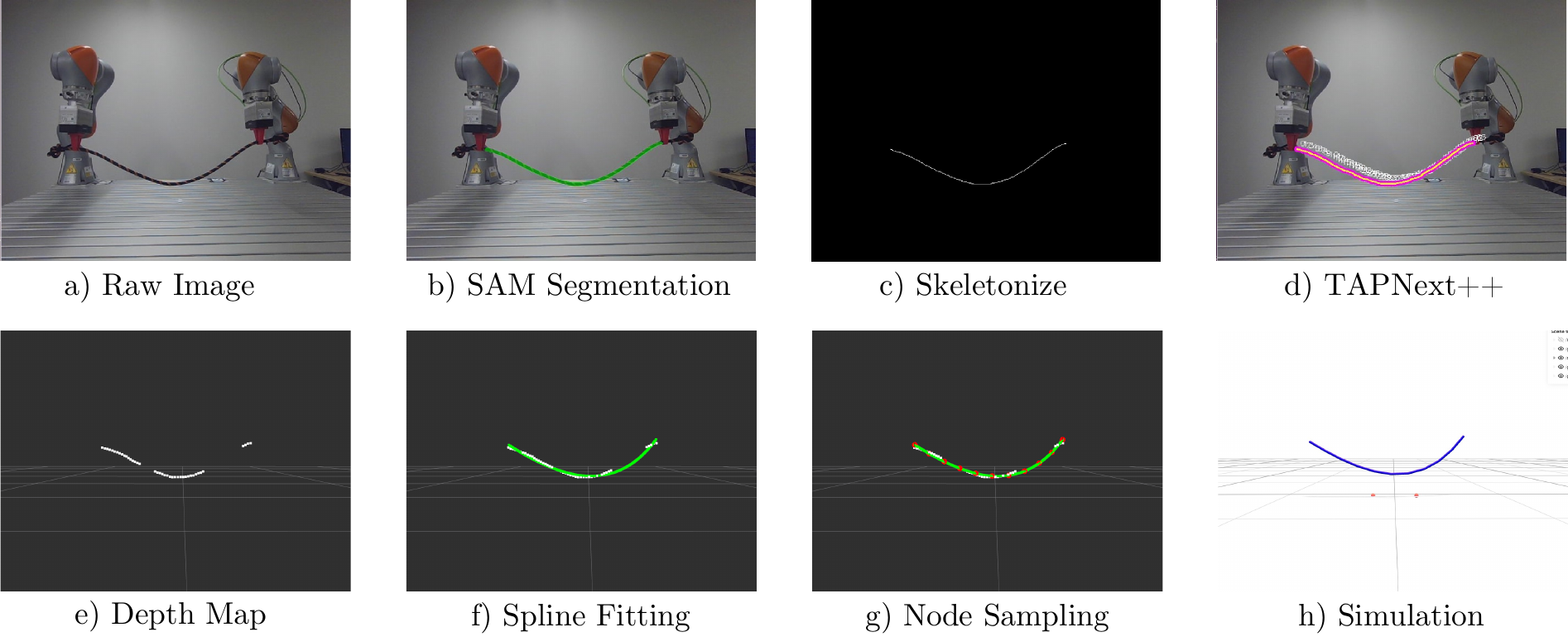}
    \caption{Perception pipeline for rope state estimation. Raw front-camera images (a) are processed by a SAM3 segmentation network (b). The resulting mask is skeletonized (c), and skeleton points are tracked using TAPNext++ (d). RGB points are then associated with their corresponding RGB-D depth map (e). A spline is fit through the recovered 3D points (f), and fixed inter-node distances are sampled along the spline (g) to reconstruct the rope state (h).}
    \label{fig:rope_perception_pipeline}
    \vspace{-15pt}
\end{figure}

We use an approximately 1 cm diameter braided rope with a total length of 1.5 meters. In order to fit the rope within the workspace, we tie both ends of the rope, reducing its effective length to approximately 0.8 m. For our simulation model, we use 10 links.

Our primary perception pipeline combines semantic segmentation with keypoint tracking. In Fig.~\ref{fig:rope_perception_pipeline}, we show snapshots of our perception pipeline. First, we use Segment Anything Model 3 (SAM3) with the prompt ``rope" to segment the rope in the RGB image. The resulting mask is skeletonized to produce a one-pixel-wide centerline through the rope. We then sample a fixed number of points along this centerline and use these points to initialize and periodically re-ground a TapNext++ tracker. Given these RGB pixels, TapNext++ tracks the motion of these keypoints across video frames, enabling continuous tracking of rope keypoints between segmentation updates.

This hybrid design helps balance semantic robustness and runtime efficiency. While SAM3 provides concept-grounded rope detection, we found in our experiments that a single inference takes approximately 200ms, which introduces too much latency for real-time hardware control. TapNext++, on the other hand, runs at lower latency and tracks the rope keypoints frame by frame, but can drift over time as it is not grounded by language or object semantics. We therefore use SAM3 intermittently to reinitialize or reground TapNext++, while TapNext++ provides higher-rate keypoint tracking between SAM3 updates.

The tracked 2D keypoints are then corresponded with the depth image to recover their 3D positions in the workstation frame. To reduce noise and convert these measurements into the state representation used by our simulator, we fit a spline through the 3D points and resample along the spline according to the desired rope segment length. As a result, we obtain our full rope state estimate as an ordered sequence of 3D nodes with fixed segment spacing.

We found that the frequency of SAM3 re-grounding creates a tradeoff between robustness and runtime. Running SAM3 once every second results in TapNext++ running at approximately 15 Hz, while running SAM3 once every 5 seconds allows TapNext++ to run at approximately 25 Hz. This slowdown is likely due to SAM3 competing with TapNext++ for GPU memory and kernel resources. While this pipeline works empirically, the resulting perception rate is still slower than the solver, whose solve time averages less than 20 ms. As a result, the solver must sometimes be delayed to allow the perception to catch up.

To address this latency bottleneck, we also implement a faster rope reconstruction pipeline that trades some perception accuracy for higher control rates. During the initial pickup phase, when the rope is static, we use the full segmentation-and-tracking pipeline to estimate the rope state. After the manipulators grasp the rope, we reconstruct the rope using the known gripper poses and the known rope length between the two manipulators. This reconstruction assumes that the rope shape is dominated by gravitational sag. While this approximation is less accurate than direct perception, it provides a sufficiently reasonable estimate of the rope configuration at substantially higher rates, making it more suitable for closed-loop control.

\subsection{Cloth}

\begin{figure}[H]
    \centering
    \includegraphics[width=\linewidth]{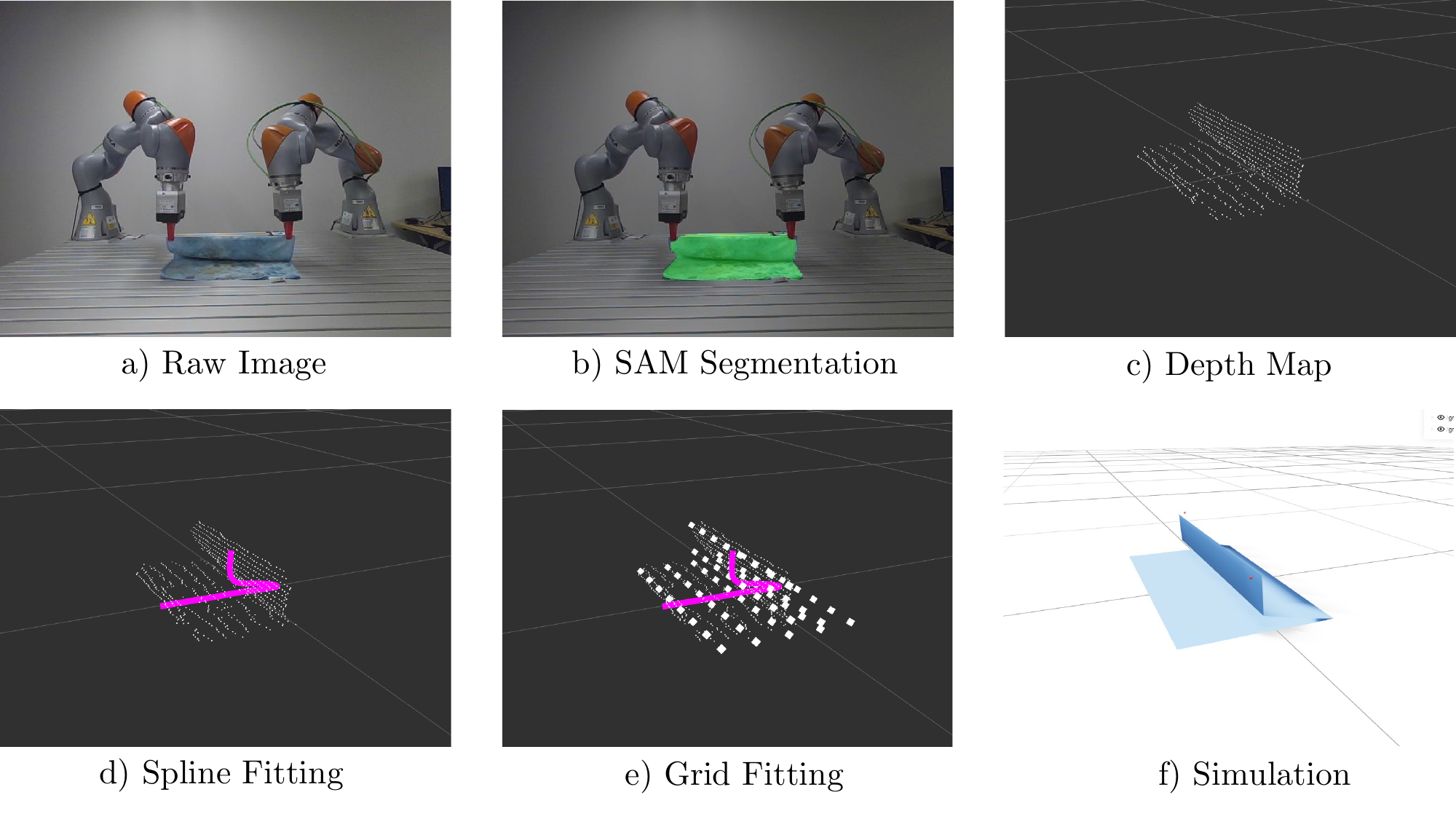}
    \caption{Perception pipeline for cloth state estimate. Raw front-camera images (a) are processed by a SAM3 segmentation network (b). The resulting RGB mask is associated with the depth map to recover a partial 3D point cloud of the cloth (c). A spline is fit to the observed boundary and extended to account for occluded portions, ensuring that the spline's length is consistent with the known cloth length (d). Grid points are then fit to the point cloud and points in occluded regions are generated according to the spline (e). This results in a complete cloth state representation that is provided to the simulator (f).}
    \label{fig:cloth_perception_pipeline}
    \vspace{-15pt}
\end{figure}

We use an approximately 38 cm x 38 cm cloth. For our simulation model, the cloth is discretized into a 7 x 7 grid of cells, corresponding to an 8 x 8 lattice of nodes and yielding 64 cloth nodes in total. The full state has dimension ($64 \times 3 + 3  \times 2 = 198$), consisting of the 3D positions of the cloth nodes and two 3D gripper positions. 

Estimating the cloth state is more challenging than estimating the rope state due to its higher dimensionality, highly deformable, and often self-occluding nature. To make the state estimation for cloth folding more tractable, we simplify perception by assuming that the (x)- and (z)-coordinates are constant along each row of the cloth. In other words, each row is represented as a straight line spanning the cloth width in the (y)-direction, reducing the state estimation problem to recovering the row-wise centerline geometry. To make the physical task consistent with this approach, we constrain the positions of the grippers in the optimization to differ in the (x)- and (z)-plane by at most 5 cm. We summarize and show snapshots of the perception pipeline (Fig.~\ref{fig:cloth_perception_pipeline}).

Our cloth perception pipeline follows the same general structure as the rope perception pipeline. First, we pass the raw images through a SAM3 segmentation pipeline prompted with ``cloth." The resulting segmentation mask is corresponded with the depth map to recover 3D points on the visible cloth surface. However, during folding, portions of the cloth may be occluded by the folded layer and therefore not visible to the front camera and its depth camera. To handle this, we project the recovered points onto the (x)-(z) plane, yielding a 2D point distribution, and fit a centerline spline through these points. Using this spline, we calculate the total arclength and compare this with the expected length of the cloth. If the arclength is significantly shorter than the known cloth length, we treat the missing length as the occluded portion of the cloth.

Lastly, we fit grid points corresponding to the cloth lattice to the visible point cloud and add additional grid points behind the visible cloth to account for occluded regions. This reconstructed state is then passed to the simulator for planning.

\section{RL training convergence}\label{app:rl_convergence}
\begin{figure}[H]
    \centering
    \begin{subfigure}[b]{6.5cm} 
        \centering
        \includegraphics[width=\textwidth]{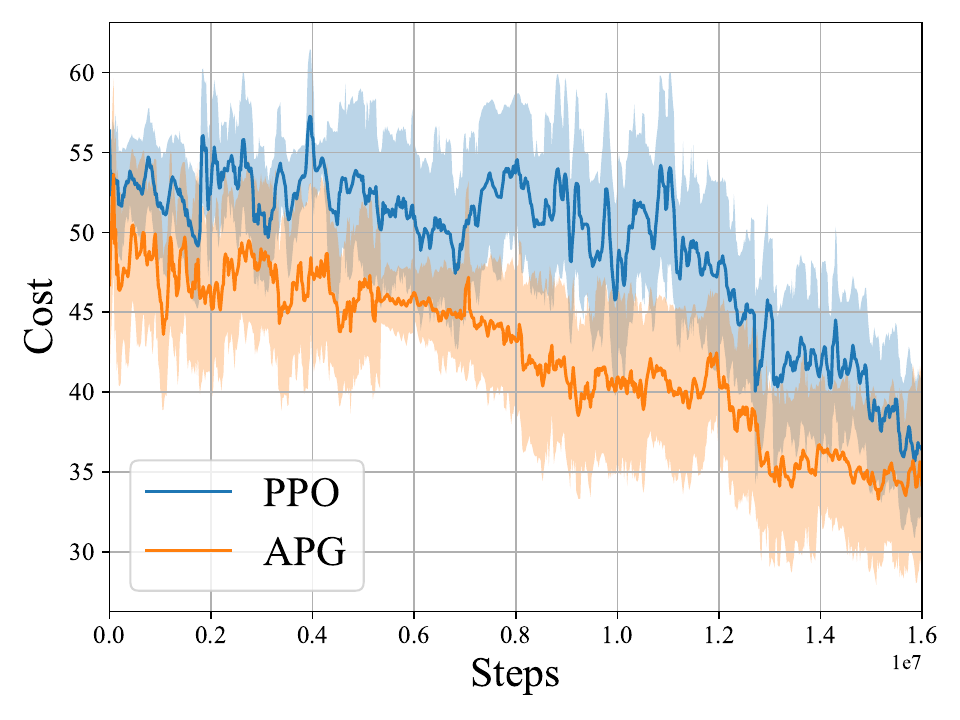}
        \subcaption{}
    \end{subfigure}
    \begin{subfigure}[b]{6.5cm}
        \centering
        \includegraphics[width=\textwidth]{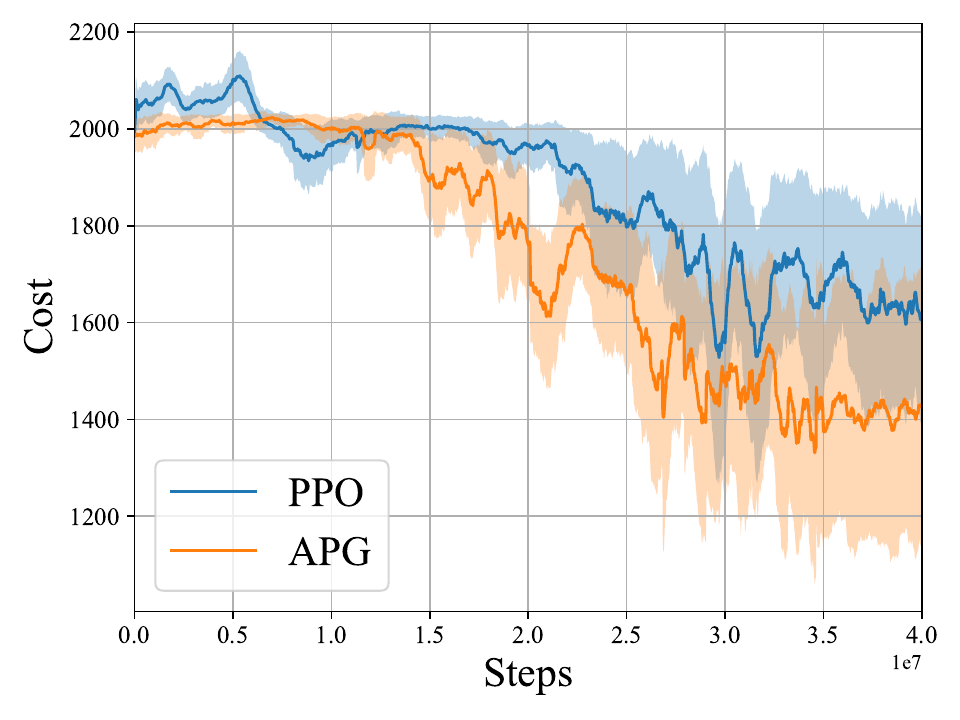}
        \subcaption{}
    \end{subfigure}
    \caption{RL training convergence for the (a) Lift Rope task, and (b) Drag Rope task.}
    \label{fig:rl_convergence}
\end{figure}

In Fig.~\ref{fig:rl_convergence}, we compare the convergence rates of PPO and APG on the Lift Rope and Drag Rope tasks. Because of the differentiability of our simulator, APG can exploit analytical policy gradients obtained by differentiating through the simulator, which accelerates training efficiency. PPO, on the other hand, cannot utilize this gradient information. In both tasks, APG converges to a lower cost faster than PPO. This improvement is due to the contact-smoothed dynamics, which provide lower-variance gradient estimates and improve sample efficiency. Specifically, in the Lift Rope task, APG converges to the same cost as PPO on average in 45.95\% (median 36.01\%) fewer steps than PPO and in 16.63\% (median 27.43\%) fewer steps for the Drag Rope task. These results demonstrate that our differentiable simulator can be used to accelerate RL policy learning.

\section{State Feedback MPC Tubes}\label{app:mpc_tubes}
\begin{figure}[H]
    \centering
    \includegraphics[width=\linewidth]{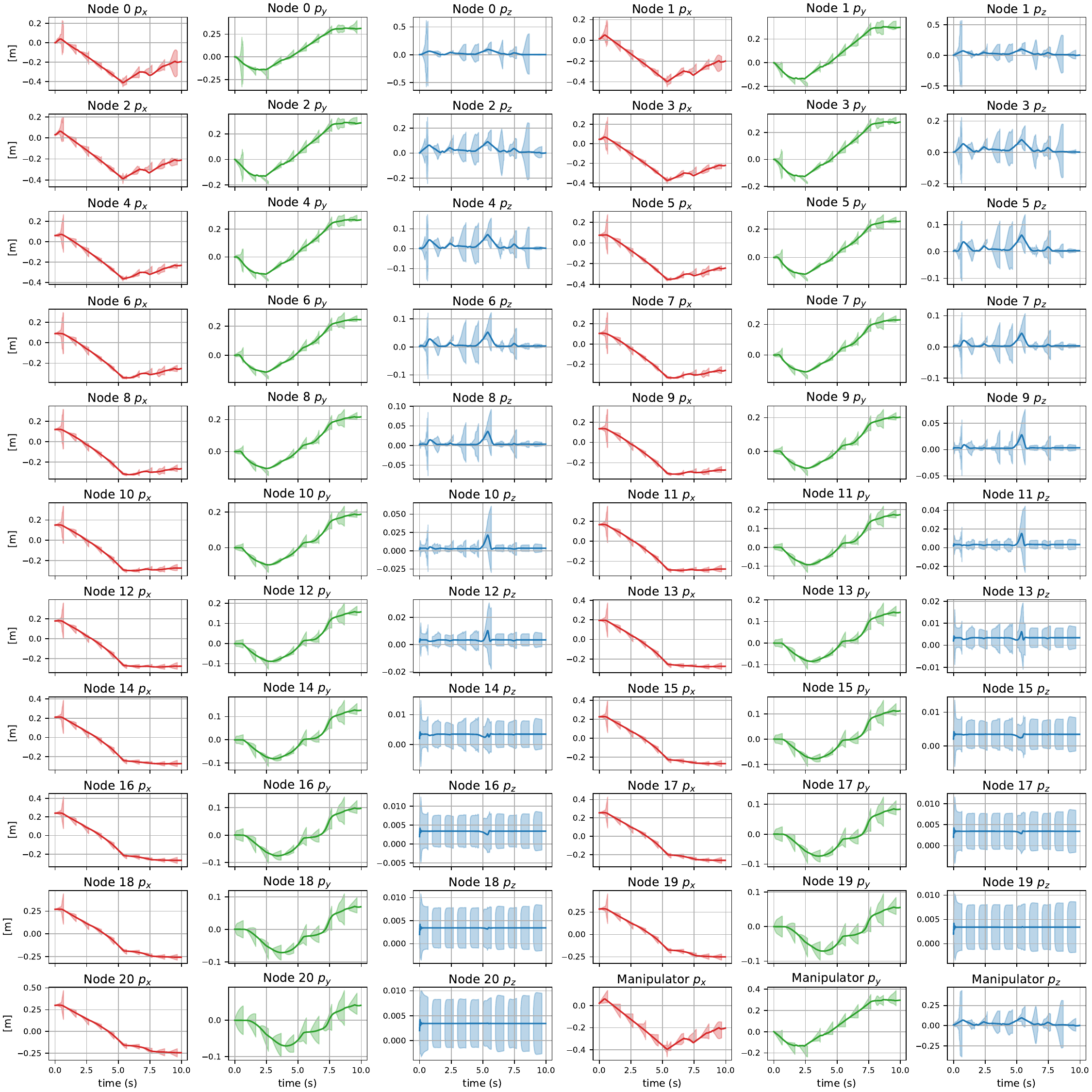}
    \caption{Forward reachable tubes under MPC for the drag rope task for all 66 states.}
    \label{fig:drag-rope-mpc-tubes}
\end{figure}

Fig.~\ref{fig:drag-rope-mpc-tubes} shows the forward reachable tubes of the drag rope task in Fig.~\ref{fig:drag-rope} at selected MPC steps, overlaid with the trajectory (dark line) realized by closed-loop MPC with replanning. We note that in an MPC setting, the executed trajectory may not coincide with the tubes computed at previous time steps because the controller replans after each execution step. Disturbances and the finite prediction horizon can cause subsequent optimizations to select a lower-cost trajectory that deviates from the previously planned path and its associated robust tube. For guaranteed robust recursive feasibility, an approach such as \cite{leeman2025guaranteed} can be applied. Alternatively, rollouts using the robust feedback controller generated by SLS are guaranteed to remain within the computed tubes, but only over the finite horizon for which the controller was synthesized, such as in Fig.~\ref{fig:output_feedback_all_tubes}, which presents full-horizon tubes.

Fig.~\ref{fig:fold-cloth-mpc-tubes} shows the forward reachable tubes for the fold cloth task in Fig.~\ref{fig:fold-cloth} at selected MPC steps. A constraint limits the cloth height to 3 cm in the $z$ direction. Because MPC replans after each execution step, the executed trajectory does not necessarily remain within tubes computed at previous time steps. Nevertheless, by enforcing constraints on the tube-valued predictions during each replan, the controller robustly satisfies the $z$-height constraint throughout the task.

\begin{figure}[H]
    \centering
    \includegraphics[width=\linewidth]{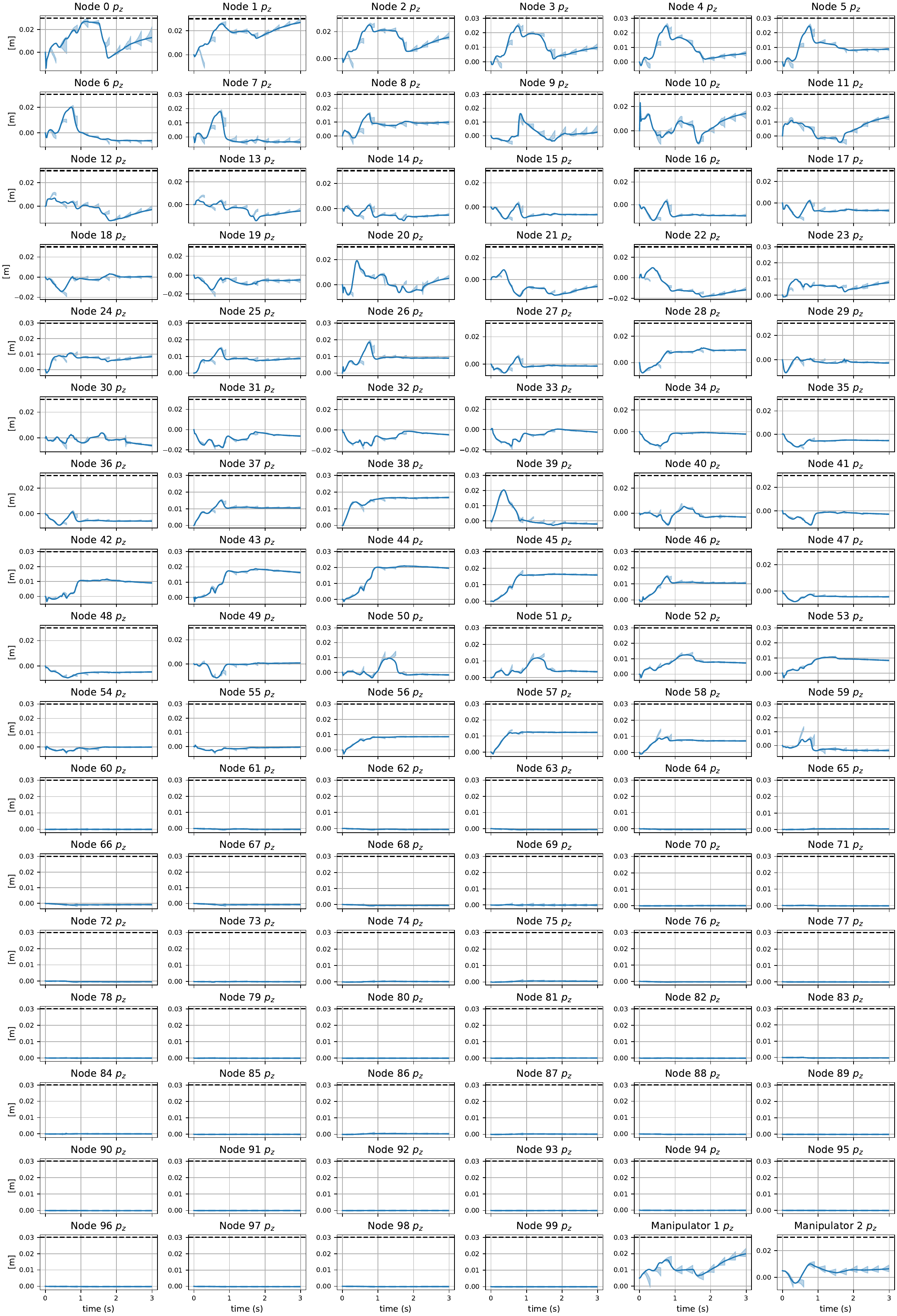}
    \caption{Forward reachable tubes under MPC for the fold cloth task for the $z$-coordinate states.}
    \label{fig:fold-cloth-mpc-tubes}
\end{figure}

\section{Output Feedback Tubes}\label{app:output_feedback_tubes}

We show the tubes for all 39 states for the output feedback experiment in Fig.~\ref{fig:output_feedback}. The rollouts are evaluated under random dynamical and measurement disturbances, while control inputs are generated by the robust observer-based controller synthesized by CORD-SLS.

\begin{figure}[H]
    \centering
    \includegraphics[width=\linewidth]{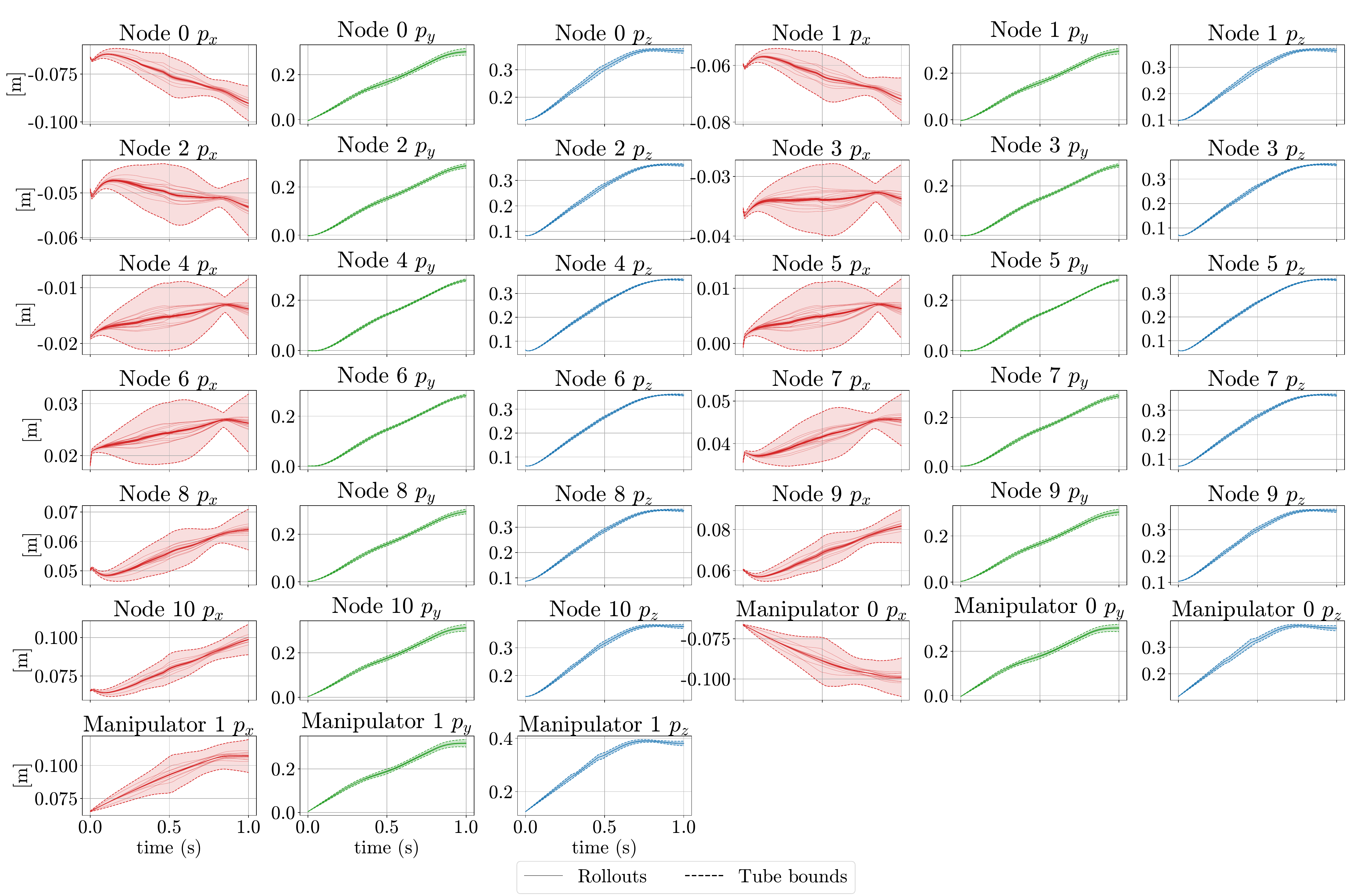}
    \vspace{-15pt}
    \caption{Forward reachable tubes for the output-feedback rope manipulation task for all 39 states. The red trajectories represent the states corresponding to the $x$ dimension, green the $y$ dimension, and blue the $z$ dimension.}
    \label{fig:output_feedback_all_tubes}
\end{figure}

\section{Simulator Comparison}\label{app:simulator}
Table~\ref{tab:sim-speed} reports the runtime of forward simulation and dynamics Jacobian computation for a 20-node rope and a 100-node cloth. Our simulator consistently achieves the fastest execution time among the compared differentiable simulators. Furthermore, matrix prefactorization significantly reduces the overhead of gradient computation, making dynamics Jacobian computation time comparable to forward simulation.

DEFORM \cite{chen2024differentiable} exhibits slower simulation performance because its computation pipeline is not compiled into GPU-native code.
DaXBench \cite{chen2022daxbench} employs different simulation methods for ropes and cloth, using the Material Point Method (MPM) for ropes and a mass-spring model for cloth, which leads to different computational costs.

\begin{table}[H]
\centering
\begin{tabular}{llcc}
\toprule
\multicolumn{1}{c}{}   &                                      & Forward simulation [ms] & Dynamics Jacobian computation [ms] \\
\midrule
\multirow{3}{*}{Rope}  & Ours                                 & 1.095 $\pm$ 0.162       &   5.198 $\pm$ 2.533                \\ 
                       & DEFORM \cite{chen2024differentiable} & 49.03 $\pm$ 20.04       & 2128.81 $\pm$ 55.14                \\
                       & DaXBench \cite{chen2022daxbench}     & 786.48 $\pm$ 76.23      & 11561.57 $\pm$ 122.07              \\
\midrule
\multirow{2}{*}{Cloth} & Ours                                 & 15.08 $\pm$ 1.209       &   10.06 $\pm$ 0.983                \\
                       & DaXBench \cite{chen2022daxbench}     & 21.71 $\pm$ 0.722       &  110.19 $\pm$ 2.663                \\
\bottomrule
\end{tabular}
\caption{Comparison of simulation time across different differentiable simulators.}
\label{tab:sim-speed}
\end{table}

We next benchmark the simulation accuracy of our method against DEFORM. To construct the benchmark dataset, we collect 100 samples of randomized manipulation actions on hardware while running our perception pipeline to estimate the rope state. For each sample, we record the current state, control action, and resulting next state, i.e. ($x^+ = f(x,u)$). We note that due to the latency in the perception pipeline, we are unable to directly observe the instantaneous post-action state and therefore cannot reliably capture transient swinging or momentum-driven dynamics. Instead, the recorded next state corresponds to the settled configuration of the rope after the motion has dissipated, making the dataset more representative of quasi-static behavior.

We evaluate both simulators under two settings. First, we report the one-step prediction accuracy of the base simulator, measured as the mean L2 distance between the predicted and observed next states. Second, we evaluate each simulator with an augmented learned residual model. The original DEFORM framework employs a learned residual consisting of a network architecture to compensate for modeling errors. To ensure a fair comparison, we train an identical residual architecture for both simulators using the same training procedure and hyperparameters. The dataset is split into 70\% training and 30\% validation data, and we report the mean validation error for the residual-corrected models. For all experiments, we set the rope length to 0.8m and discretize it into 10 links.

A key difference between our method and the DEFORM simulator is that DEFORM is formulated as a fully dynamic simulator, whereas our method assumes quasi-static behavior. Consequently, directly comparing a single forward simulation step would be unfair, since the recorded hardware data corresponds to settled equilibrium configurations rather than transient dynamics. Therefore, to account for this discrepancy when evaluating DEFORM, after the commanded action, we repeatedly propagate the dynamics with zero control input until the rope converges to an equilibrium configuration. This produces predictions that are consistent with the quasi-static states observed by our perception system.

Table~\ref{tab:sim-accuracy} summarizes the resulting one-step prediction accuracy. DEFORM achieves approximately 7\% lower prediction error than our simulator. This performance gap is expected, as DEFORM models additional physical degrees of freedom, including twisting deformation, through the discrete elastic rods formulation, whereas our simulator uses an elastic energy model that captures only stretching and bending deformation. As a result, certain higher-order deformation effects are not explicitly represented, leading to a loss in model fidelity. However, we note that the computational efficiency of our simulator and planner enables rapid replanning during execution, allowing the system to quickly compensate for modeling inaccuracies and disturbances. Finally, we show that both simulators benefit from the learned residual correction model, with DEFORM maintaining a slight advantage due to its stronger underlying physical model.

\begin{table}[H]
\centering
\begin{tabular}{llc}
\toprule
\multicolumn{1}{c}{}   &                                            & Mean L2 Error [m] \\
\midrule
\multirow{3}{*}{Rope}  & Ours                                               & 0.0150 $\pm$ 0.0040     \\  
                       & DEFORM \cite{chen2024differentiable}               & 0.0139 $\pm$ 0.0029     \\
                       & Ours with learned Residual                         & 0.0075 $\pm$ 0.0068     \\
                       & DEFORM with learned Residual \cite{chen2024differentiable} &  0.0071 $\pm$ 0.0032 \\
\bottomrule
\end{tabular}
\caption{Comparison of simulation accuracy of our method against DEFORM.}
\label{tab:sim-accuracy}
\end{table}

\end{document}